\newtheorem{lemma}{Lemma}
\newtheorem{theorem}{Theorem}
\newtheorem{remark}{Remark}
\newcommand{\E}{\mathbb{E}}
\newcommand{\eat}[1]{}
\DeclareMathOperator*{\argmax}{arg\,max} 
\DeclareMathOperator*{\argmin}{arg\,min} 
\let\oldReturn\Return
\renewcommand{\Return}{\State\oldReturn}
\def\thanks#1{\protected@xdef\@thanks{\@thanks
        \protect\footnotetext{#1}}}
\title{Improving Black-box Adversarial Attacks with a Transfer-based Prior}
\author{%
  Shuyu Cheng$^*$, Yinpeng Dong$^*$, Tianyu Pang, Hang Su, Jun Zhu$^{\dagger}$\thanks{$^*$Equal contribution. $^{\dagger}$Corresponding author.} \\
  Dept. of Comp. Sci. and Tech., BNRist Center, State Key Lab for Intell. Tech. \& Sys.,\\
  Institute for AI, THBI Lab, Tsinghua University, Beijing, 100084, China\\
  \scriptsize \texttt{\{chengsy18, dyp17, pty17\}@mails.tsinghua.edu.cn, \{suhangss, dcszj\}@mail.tsinghua.edu.cn} \\
}
\begin{document}

\maketitle

\begin{abstract}
We consider the black-box adversarial setting, where the adversary has to generate adversarial perturbations without access to the target models to compute gradients. Previous methods tried to approximate the gradient either by using a transfer gradient of a surrogate white-box model, or based on the query feedback. However, these methods often suffer from low attack success rates or poor query efficiency since it is non-trivial to estimate the gradient in a high-dimensional space with limited information. To address these problems, we propose a prior-guided random gradient-free (P-RGF) method to improve black-box adversarial attacks, which takes the advantage of a transfer-based prior and the query information simultaneously. The transfer-based prior given by the gradient of a surrogate model is appropriately integrated into our algorithm by an optimal coefficient derived by a theoretical analysis. Extensive experiments demonstrate that our method requires much fewer queries to attack black-box models with higher success rates compared with the alternative state-of-the-art methods.
\end{abstract}

\section{Introduction}
Although deep neural networks (DNNs) have achieved significant success on various tasks~\cite{Goodfellow-et-al2016}, they have been shown to be vulnerable to adversarial examples~\cite{Biggio2013Evasion,Szegedy2013,Goodfellow2014}, which are crafted to fool the models by modifying normal examples with human imperceptible perturbations.
Many efforts have been devoted to studying the generation of adversarial examples, which is crucial to identify the weaknesses of deep learning algorithms~\cite{Szegedy2013,Athalye2018Obfuscated}, serve as a surrogate to evaluate robustness~\cite{carlini2016}, and consequently contribute to the design of robust deep learning models~\cite{madry2017towards}.

In general, adversarial attacks can be categorized into white-box attacks and black-box attacks. In the white-box setting, the adversary has full access to the model, and can use various gradient-based methods~\cite{Goodfellow2014,Kurakin2016,carlini2016,madry2017towards} to generate adversarial examples. 
In the more challenging black-box setting, the adversary has no or limited knowledge about the model, and crafts adversarial examples without any gradient information. The black-box setting is more practical in many real-world situations.

Many methods~\cite{papernot2016practical,Chen2017ZOO,Brendel2018Decision,Dong2017,ilyas2018black,Bhagoji_2018_ECCV,tu2018autozoom,ilyas2018prior,dong2019efficient} have been proposed to perform black-box adversarial attacks.
A common idea is to use an approximate gradient instead of the true gradient for crafting adversarial examples. The approximate gradient could be either the gradient of a surrogate model (termed as \textit{transfer-based} attacks) or numerically estimated by the zeroth-order optimization methods (termed as \textit{query-based} attacks).
In transfer-based attacks, adversarial examples generated for a different model are probable to remain adversarial for the target model due to the transferability~\cite{Papernot20162}. Although various methods~\cite{Dong2017,Liu2016,dong2019evading} have been introduced to improve the transferability, the attack success rate is still unsatisfactory. The reason is that there lacks an adjustment procedure in transfer-based attacks when the gradient of the surrogate model points to a non-adversarial region of the target model.
In query-based adversarial attacks, the gradient can be estimated by various methods, such as finite difference~\cite{Chen2017ZOO,Bhagoji_2018_ECCV}, random gradient estimation~\cite{tu2018autozoom}, and natural evolution strategy~\cite{ilyas2018black}. These methods usually result in a higher attack success rate compared with the transfer-based attack methods~\cite{Chen2017ZOO,Bhagoji_2018_ECCV}, but they require a tremendous number of queries to perform a successful attack.
The inefficiency mainly comes from the underutilization of priors, since the current methods are nearly optimal to estimate the gradient~\cite{ilyas2018prior}.


To address the aforementioned problems and improve black-box attacks, we propose a \textbf{prior-guided random gradient-free (P-RGF)} method to utilize the transfer-based prior for query-efficient black-box attacks under the gradient estimation framework.
The transfer-based prior is given by the gradient of a surrogate white-box model, which contains abundant prior knowledge of the true gradient.
Our method provides a gradient estimate by querying the target model with random samples that are biased towards the transfer gradient and acquiring the corresponding loss values.
We provide a theoretical analysis on deriving the optimal coefficient, which controls the strength of the transfer gradient.
Our method is also flexible to integrate other forms of prior information. As a concrete example, we incorporate the commonly used \emph{data-dependent prior}~\cite{ilyas2018prior} into our algorithm along with the transfer-based prior. 
Extensive experiments demonstrate that  
our method significantly outperforms the previous state-of-the-art methods in terms of black-box attack success rate and query efficiency, which verifies the superiority of our method for black-box adversarial attacks.

\section{Background} 
In this section, we review the background and the related work on black-box adversarial attacks.

\vspace{-1ex}
\subsection{Adversarial setup}
\vspace{-1ex}
Given a classifier $C(x)$ and an input-label pair $(x, y)$, the goal of attacks is to generate an adversarial example $x^{adv}$ that is misclassified while the distance between the adversarial input and the normal input measured by the $\ell_p$ norm is smaller than a preset threshold $\epsilon$ as 
\begin{equation}
   C(x^{adv}) \neq y, \text{ s.t. }  \|x^{adv}-x\|_p\leq\epsilon.
\end{equation}
Note that this corresponds to the untargeted attack. We present our framework and algorithm based on the untargeted attack for clarity, while the extension to the targeted one is straightforward.

An adversarial example can be generated by solving the constrained optimization problem as 
\begin{equation}
    x^{adv} = \argmax_{x':\|x' - x\|_{p} \leq \epsilon} f(x', y),
    \label{eq:problem}
\end{equation}
where $f$ is a loss function on top of the classifier $C(x)$, e.g., the cross-entropy loss.
Many gradient-based methods~\cite{Goodfellow2014,Kurakin2016,carlini2016,madry2017towards} have been proposed to solve this optimization problem.
The state-of-the-art projected gradient descent (PGD)~\cite{madry2017towards} iteratively generates adversarial examples as 
\begin{equation}
x_{t+1}^{adv} = \Pi_{\mathcal{B}_p(x,\epsilon)} (x_t^{adv} + \eta\cdot g_t),
\label{eq:iter}
\end{equation}
where $\Pi$ is the projection operation, $\mathcal{B}_p(x,\epsilon)$ is the $\ell_p$ ball centered at $x$ with radius $\epsilon$, $\eta$ is the step size, and $g_t$ is the normalized gradient under the $\ell_p$ norm, e.g., $g_t = \frac{\nabla_{x}f(x_t^{adv},y)}{\|\nabla_{x}f(x_t^{adv},y)\|_2}$ under the $\ell_2$ norm, and $g_t = \mathrm{sign}(\nabla_{x}f(x_t^{adv},y))$ under the $\ell_{\infty}$ norm.
This method requires full access to the gradient of the target model, which is designed under the white-box attack setting.

\vspace{-1ex}
\subsection{Black-box attacks}
\vspace{-1ex}
The direct access to the model gradient is unrealistic in many real-world applications, where we need to perform attacks in the black-box manner.
We can still adopt the PGD method to generate adversarial examples, except that the true gradient $\nabla_{x}f(x,y)$ is usually replaced by an approximate gradient. Black-box attacks can be roughly divided into transfer-based attacks and query-based attacks. 
Transfer-based attacks adopt the gradient of a surrogate white-box model to generate adversarial examples, which are probable to fool the black-box model due to the transferability~\cite{papernot2016practical,Liu2016,Dong2017}. Query-based attacks estimate the gradient by the zeroth-order optimization methods, when the loss values could be accessed through queries. 
\citet{Chen2017ZOO} propose to use the symmetric difference quotient~\cite{lax2014calculus} to estimate the gradient at each coordinate as
\begin{equation}
\hat{g}_i = \frac{f(x+\sigma e_i,y)-f(x-\sigma e_i,y)}{2\sigma} \approx \frac{\partial f(x,y)}{\partial x_i},
\label{eq:finite-difference}
\end{equation}
where $\sigma$ is a small constant, and $e_i$ is the $i$-th unit basis vector.
Although query-efficient mechanisms have been developed~\cite{Chen2017ZOO,Bhagoji_2018_ECCV}, the coordinate-wise gradient estimation inherently results in the query complexity being proportional to the input dimension $D$, which is prohibitively large with high-dimensional input space, e.g., $D\approx270$,$000$ for ImageNet~\cite{russakovsky2015imagenet}.
To improve query efficiency, the approximated gradient $\hat{g}$ can be estimated by the random gradient-free (RGF) method~\cite{nesterov2017random,ghadimi2013stochastic,duchi2015optimal} as
\begin{equation}
     \hat{g} = \frac{1}{q}\sum_{i=1}^{q}\hat{g}_i, \text{ where  } \hat{g}_i = \frac{f(x+\sigma u_i,y)-f(x,y)}{\sigma}\cdot u_i,
\label{eq:estimate}
\end{equation}
where $\{u_i\}_{i=1}^q$ are the random vectors independently sampled from a distribution $\mathcal{P}$ on $\mathbb{R}^D$, and $\sigma$ is the parameter to control the sampling variance. It is noted that $\hat{g}_i \rightarrow u_i^\top \nabla_x f(x,y) \cdot u_i$ when $\sigma\rightarrow 0$, which is nearly an unbiased estimator of the gradient~\cite{duchi2015optimal} when $\mathbb{E}[u_iu_i^\top] = \mathbf{I}$. $\hat{g}$ is the average estimation over $q$ random directions to reduce the variance.
The natural evolution strategy (NES)~\cite{ilyas2018black} is another variant of Eq.~\eqref{eq:estimate}, which conducts the antithetic sampling over a Gaussian distribution.
\citet{ilyas2018prior} show that these methods are nearly optimal to estimate the gradient, but their query efficiency could be improved by incorporating informative priors.
They identify the time and data-dependent priors for black-box attacks. Different from the alternative methods, our proposed transfer-based prior is more effective as shown in the experiments. Moreover, the transfer-based prior can also be used together with other priors. We demonstrate the flexibility of our algorithm by incorporating the commonly used data-dependent prior as an example.

\subsection{Black-box attacks based on both transferability and queries}
There are also several works that adopt both the transferability of adversarial examples and the model queries for black-box attacks.
\citet{papernot2016practical,Papernot20162} train a local substitute model to mimic the black-box model with a synthetic dataset, in which the labels are given by the black-box model through queries. Then the black-box model is attacked by the adversarial examples generated for the substitute model based on the transferability.
A meta-model~\cite{oh2017towards} can reverse-engineer the black-box model and predict its attributes (such as architecture, optimization procedure, and training data) through a sequence of queries.
Given the predicted attributes of the black-box model, the attacker can find similar surrogate models, which are better to craft transferable adversarial examples against the black-box model.
These methods all use queries to obtain knowledge of the black-box model, and train/find surrogate models to generate adversarial examples, with the purpose of improving the transferability.
However, we do not optimize the surrogate model, but focus on utilizing the gradient of a fixed surrogate model to obtain a more accurate gradient estimate.

A recent work~\cite{brunner2018guessing} also uses the gradient of a surrogate model to improve the efficiency of query-based black-box attacks. This method focuses on a different attack scenario, where the model only provides the hard-label outputs, but we consider the setting where the loss values could be accessed. Moreover, this method controls the strength of the transfer gradient by a preset hyperparameter, but we obtain its optimal value through a theoretical analysis based on the gradient estimation framework.
It's worth mentioning that a similar but independent work~\cite{maheswaranathan2018guided} also uses surrogate gradients to improve zeroth-order optimization, but they did not apply their method to black-box adversarial attacks.

\section{Methodology}
In this section, we first introduce the gradient estimation framework. Then we propose the prior-guided random gradient-free (P-RGF) algorithm. We further incorporate the data-dependent prior~\cite{ilyas2018prior} into our algorithm. We also provide an alternative algorithm for the same purpose in Appendix~\ref{sec:average}.

\subsection{Gradient estimation framework}
\label{sec:framework}

The key challenge in black-box adversarial attacks is to estimate the gradient of a model, which can be used to conduct gradient-based attacks.
In this paper, we aim to estimate the gradient $\nabla_{x}f(x,y)$ of the black-box model $f$
more accurately to improve black-box attacks.
We denote the gradient $\nabla_x f(x,y)$ by $\nabla f(x)$ in the following for simplicity. We assume that $\nabla f(x)\neq 0$ in this paper.
The objective of gradient estimation is to find the best estimator, which approximates the true gradient $\nabla f(x)$ by reaching the minimum value of the loss function as
\begin{equation}
    \hat{g}^* = \argmin_{\hat{g}\in \mathcal{G}} L(\hat{g}), 
\label{eq:ge_problem}
\end{equation}
where $\hat{g}$ is a gradient estimator given by any estimation algorithm,  $\mathcal{G}$ is the set of all possible gradient estimators, and $L(\hat{g})$ is a loss function to measure the performance of the estimator $\hat{g}$.
Specifically, we let the loss function of the gradient estimator $\hat{g}$ be
\begin{equation}
    L(\hat{g}) = \min_{b\geq0} \E\|\nabla f(x)-b\hat{g}\|_2^2,
\label{eq:loss}
\end{equation}
where the expectation is taken over the randomness of the estimation algorithm to obtain $\hat{g}$. 
The loss $L(\hat{g})$ is the minimum expected squared $\ell_2$ distance between the true gradient $\nabla f(x)$ and scaled estimator $b\hat{g}$. 
The previous work~\cite{tu2018autozoom} also uses the expected squared $\ell_2$ distance $\E\|\nabla f(x)-\hat{g}\|_2^2$ as the loss function, which is similar to ours. However, the value of this loss function will change with different magnitude of the estimator $\hat{g}$. In generating adversarial examples, the gradient is usually normalized~\cite{Goodfellow2014,madry2017towards}, such that the direction of the gradient estimator, instead of the magnitude, will affect the performance of attacks. Thus, we incorporate a scaling factor $b$ in Eq.~\eqref{eq:loss} and minimize the error w.r.t. $b$, which can neglect the impact of the magnitude on the loss of the estimator $\hat{g}$.

\subsection{Prior-guided random gradient-free method}
\label{sec:biased_sampling}
In this section, we present the prior-guided random gradient-free (P-RGF) method, which is a variant of the random gradient-free (RGF) method. Recall that in RGF, the gradient can be estimated via a set of random vectors $\{u_i\}_{i=1}^q$ as in Eq.~\eqref{eq:estimate} with $q$ being the number of random vectors.
Directly using RGF without prior information will result in poor query efficiency as shown in our experiments.
In our method, we propose to sample the random vectors that are biased towards the transfer gradient, to fully exploit the prior information.

Let $v$ be the normalized transfer gradient of a surrogate model such that $\|v\|_2=1$, and the cosine similarity between the transfer gradient and the true gradient be 
\begin{equation}
    \alpha=v^\top\overline{\nabla f(x)} \text{  with   } \overline{\nabla f(x)}=\|\nabla f(x)\|_2^{-1}\nabla f(x), 
\end{equation}
where $\overline{\nabla f(x)}$ is the $\ell_2$ normalization of the true gradient $\nabla f(x)$.\footnote{ 
We will use $\overline{e}$ to denote the $\ell_2$ normalization of a vector $e$ in this paper.}
We assume that $\alpha\geq 0$ without loss of generality, since we can reassign  $v\leftarrow -v$ when $\alpha<0$.
Although the true value of $\alpha$ is unknown, we could estimate it efficiently, which will be introduced in Sec.~\ref{sec:alpha}.

For the RGF estimator $\hat{g}$ in Eq.~\eqref{eq:estimate}, we further assume that the sampling distribution $\mathcal{P}$ is defined on the unit hypersphere in the $D$-dimensional space, such that the random vectors $\{u_i\}_{i=1}^q$ drawn from $\mathcal{P}$ satisfy $\|u_i\|_2=1$. Then, we can represent the loss of the RGF estimator by the following theorem.
\begin{theorem}\label{the:1}
(Proof in Appendix \ref{sec:proof_the-1}) If $f$ is differentiable at $x$, the loss of the RGF estimator $\hat{g}$ is
\begin{equation}
    \lim_{\sigma\to 0}L(\hat{g})=\|\nabla f(x)\|_2^2-\frac{\big(\nabla f(x)^\top\mathbf{C}\nabla f(x)\big)^2}{(1-\frac{1}{q})\nabla f(x)^\top\mathbf{C}^2\nabla f(x)+\frac{1}{q}\nabla f(x)^\top\mathbf{C}\nabla f(x)},
    \label{eq:biased_obj}
\end{equation}
where $\sigma$ is the sampling variance, $\mathbf{C}=\E[u_iu_i^\top]$ with $u_i$ being the random vector, $\|u_i\|_2=1$, and $q$ is the number of random vectors as in Eq.~\eqref{eq:estimate}.  
\end{theorem}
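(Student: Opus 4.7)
The plan is to reduce the problem to a standard minimization over the scaling factor $b$ and then evaluate the two relevant moments $\E[\hat{g}]$ and $\E\|\hat{g}\|_2^2$ in the limit $\sigma\to 0$.

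First I would expand the squared norm inside the expectation of the loss \eqref{eq:loss} as a quadratic in $b$:
\begin{equation*}
\E\|\nabla f(x)-b\hat g\|_2^2 = \|\nabla f(x)\|_2^2 - 2b\,\nabla f(x)^\top\E[\hat g] + b^2\,\E\|\hat g\|_2^2.
\end{equation*}
Minimizing over $b\ge 0$ gives $b^\star=\nabla f(x)^\top\E[\hat g]/\E\|\hat g\|_2^2$, which is nonnegative because $\E[\hat g]\to\mathbf{C}\nabla f(x)$ and $\mathbf{C}$ is positive semidefinite (so the inner product is nonnegative). Substituting yields the clean closed form
\begin{equation*}
L(\hat g) \;=\; \|\nabla f(x)\|_2^2 \;-\; \frac{\bigl(\nabla f(x)^\top\E[\hat g]\bigr)^2}{\E\|\hat g\|_2^2}.
\end{equation*}
So the whole task becomes identifying $\E[\hat g]$ and $\E\|\hat g\|_2^2$ in the $\sigma\to 0$ limit.

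For the numerator, I would use differentiability of $f$ at $x$ to write $\hat g_i=(u_i^\top\nabla f(x))\,u_i + o(1)$ as $\sigma\to 0$, so that $\E[\hat g_i]\to \E[u_iu_i^\top]\nabla f(x)=\mathbf{C}\nabla f(x)$; averaging over $i$ preserves this. Hence $\nabla f(x)^\top\E[\hat g]\to \nabla f(x)^\top\mathbf{C}\nabla f(x)$, giving the squared term in the numerator of \eqref{eq:biased_obj}.

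For the denominator, expand
\begin{equation*}
\E\|\hat g\|_2^2 \;=\; \frac{1}{q^2}\sum_{i=1}^q\E\|\hat g_i\|_2^2 \;+\; \frac{1}{q^2}\sum_{i\neq j}\E[\hat g_i^\top\hat g_j].
\end{equation*}
The diagonal terms use $\|u_i\|_2=1$ to give $\E\|\hat g_i\|_2^2\to \E[(u_i^\top\nabla f(x))^2]=\nabla f(x)^\top\mathbf{C}\nabla f(x)$. The off-diagonal terms use the independence of $u_i$ and $u_j$ for $i\neq j$ to factor $\E[\hat g_i^\top\hat g_j]=\E[\hat g_i]^\top\E[\hat g_j]\to (\mathbf{C}\nabla f(x))^\top(\mathbf{C}\nabla f(x))=\nabla f(x)^\top\mathbf{C}^2\nabla f(x)$. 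Combining the $q$ diagonal and $q(q-1)$ off-diagonal contributions yields exactly $\tfrac{1}{q}\nabla f(x)^\top\mathbf{C}\nabla f(x)+(1-\tfrac{1}{q})\nabla f(x)^\top\mathbf{C}^2\nabla f(x)$, matching the denominator in \eqref{eq:biased_obj}.

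The only step requiring care is passing the $\sigma\to 0$ limit through the expectations; since $u_i$ lives on the compact unit sphere and $f$ is differentiable at $x$, the finite-difference quotient $(f(x+\sigma u_i)-f(x))/\sigma$ is uniformly bounded for small $\sigma$, so dominated convergence applies and the limits may be taken termwise. This is the only subtle point; everything else is algebraic bookkeeping and the two moment computations above.
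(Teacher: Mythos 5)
Your proposal is correct and follows essentially the same route as the paper's proof: reduce $L(\hat g)$ to the closed form $\|\nabla f(x)\|_2^2-\frac{(\nabla f(x)^\top\E[\hat g])^2}{\E\|\hat g\|_2^2}$ by optimizing the quadratic in $b$, compute $\E[\hat g]\to\mathbf{C}\nabla f(x)$ and $\E\|\hat g\|_2^2\to(1-\frac{1}{q})\nabla f(x)^\top\mathbf{C}^2\nabla f(x)+\frac{1}{q}\nabla f(x)^\top\mathbf{C}\nabla f(x)$ using $\|u_i\|_2=1$ and independence, and justify the interchange of limit and expectation via the uniform convergence of the difference quotient implied by differentiability. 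Your diagonal/off-diagonal expansion of $\E\|\hat g\|_2^2$ and your dominated-convergence argument are only cosmetic variants of the paper's variance decomposition and its explicit sup-over-the-sphere estimates.
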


Given the definition of $\mathbf{C}$, it needs to satisfy two constraints: (1) it should be positive semi-definite; (2) its trace should be $1$ since $\mathrm{Tr}(\mathbf{C})=\E[\mathrm{Tr}(u_i u_i^\top)]=\E[u_i^\top u_i]=1$.
It is noted from \autoref{the:1} that we can minimize $L(\hat{g})$ by optimizing $\mathbf{C}$, i.e., we can achieve an optimal gradient estimator by carefully sampling the random vectors $u_i$, yielding an query-efficient adversarial attack.

Specifically, $\mathbf{C}$ can be decomposed as $\sum_{i=1}^D \lambda_i v_i v_i^\top$, where $\{\lambda_i\}_{i=1}^D$ and $\{v_i\}_{i=1}^D$ are the eigenvalues and orthonormal eigenvectors of $\mathbf{C}$, and $\sum_{i=1}^D \lambda_i=1$.
In our method, since we propose to bias $u_i$ towards $v$ to exploit its prior information, we can specify an eigenvector to be $v$, and let the corresponding eigenvalue be a tunable coefficient.
For the other eigenvalues, we set them to be equal since we do not have any prior knowledge about the other eigenvectors.
In this case, we let
\begin{equation}
    \mathbf{C}=\lambda v v^\top + \frac{1-\lambda}{D-1}(\mathbf{I}-v v^\top),
    \label{eq:mix_C}
\end{equation}
where $\lambda \in [0,1]$ controls the strength of the transfer gradient that the random vectors $\{u_i\}_{i=1}^q$ are biased towards.
We can easily construct a random vector with unit length while satisfying Eq.~\eqref{eq:mix_C} (proof in Appendix \ref{sec:proof_eq-sample-rv}) as
\begin{equation}
    u_i=\sqrt{\lambda}\cdot v+\sqrt{1-\lambda}\cdot\overline{(\mathbf{I}-vv^\top)\xi_i},
    \label{eq:sample_rv}
\end{equation}
where $\xi_i$ is sampled uniformly from the unit hypersphere. Hereby, the problem turns to optimizing $\lambda$ that minimizes $L(\hat{g})$. 
The previous work~\cite{tu2018autozoom} can also be categorized as a special case of our method when $\lambda=\frac{1}{D}$ and  $\mathbf{C}=\frac{1}{D}\mathbf{I}$, such that the random vectors are drawn from the uniform distribution on the hypersphere.  
When $\lambda\in[0,\frac{1}{D})$, it indicates that the transfer gradient is worse than a random vector, so we are encouraged to search in other directions by using a small $\lambda$.
To find the optimal $\lambda$, we plug Eq.~\eqref{eq:mix_C} into Eq.~\eqref{eq:biased_obj}, and obtain the closed-form solution (proof in Appendix \ref{sec:proof_eq-lambda}) as
\begin{align}
\small
    \lambda^* =
    \begin{cases}
        \hfil 0 & \text{if} \; \alpha^2\leq\dfrac{1}{D+2q-2}  \\
        \dfrac{(1-\alpha^2)(\alpha^2(D+2q-2)-1)}{2\alpha^2 Dq - \alpha^4 D(D+2q-2) - 1} & \text{if} \; \dfrac{1}{D+2q-2} < \alpha^2 < \dfrac{2q-1}{D+2q-2} \\
        \hfil 1 & \text{if} \; \alpha^2 \geq \dfrac{2q-1}{D+2q-2}
    \end{cases}.
    \label{eq:lambda-1}
\end{align}
\textbf{Remark.} It can be proven (in Appendix \ref{sec:proof_monotonicity-lambda}) that $\lambda^*$ is a monotonically increasing function of $\alpha^2$, and a monotonically decreasing function of $q$ (when $\alpha^2>\frac{1}{D}$).
It means that a larger $\alpha$ or a smaller $q$ (when the transfer gradient is not worse than a random vector) would result in a larger $\lambda^*$, which makes sense since we tend to rely on the transfer gradient more when (1) it approximates the true gradient better; (2) the number of queries is not enough to provide much gradient information.

\begin{algorithm}[t]
\small
\caption{Prior-guided random gradient-free (P-RGF) method}
\label{alg:biased}
\begin{algorithmic}[1]
\Require The black-box model $f$; input $x$ and label $y$; the normalized transfer gradient $v$; sampling variance $\sigma$; number of queries $q$; input dimension $D$.
\Ensure Estimate of the gradient $\nabla f(x)$.
\State Estimate the cosine similarity $\alpha=v^\top\overline{\nabla f(x)}$ (detailed in Sec.~\ref{sec:alpha});
\State Calculate $\lambda^*$ according to Eq.~\eqref{eq:lambda-1} given $\alpha$, $q$, and $D$;
\If {$\lambda^*=1$}
\Return $v$;
\EndIf
\State $\hat{g} \leftarrow \mathbf{0}$;
\For {$i = 1$ to $q$}
\State Sample $\xi_i$ from the uniform distribution on the $D$-dimensional unit hypersphere;
\State $u_i=\sqrt{\lambda^*}\cdot v+\sqrt{1-\lambda^*}\cdot\overline{(\mathbf{I}-vv^\top)\xi_i}$;

\State $\hat{g} \leftarrow \hat{g} + \dfrac{f(x + \sigma u_i,y) - f(x,y)}{\sigma} \cdot u_i$;
\EndFor
\Return $\nabla f(x)\leftarrow \dfrac{1}{q}\hat{g}$.
\end{algorithmic}
\end{algorithm}

We summarize the P-RGF method in Algorithm~\ref{alg:biased}.
Note that when $\lambda^*=1$, we directly return the transfer gradient as the estimate of $\nabla f(x)$, which can save many queries.

\subsection{Estimation of cosine similarity}
\label{sec:alpha}
To complete our algorithm, we also need to estimate $\alpha=v^\top\overline{\nabla f(x)} =\frac{v^\top\nabla f(x)}{\|\nabla f(x)\|_2}$, where $v$ is the normalized transfer gradient.
Note that the inner product $v^\top\nabla f(x)$ can be easily estimated by the finite difference method
\begin{equation}
    v^\top\nabla f(x) \approx \frac{f(x+\sigma v, y)-f(x,y)}{\sigma}, 
    \label{eq:inner-product}
\end{equation}
using a small $\sigma$. Hence, the problem is reduced to estimating $\|\nabla f(x)\|_2$.


Suppose that it is allowed to conduct $S$ queries to estimate $\|\nabla f(x)\|_2$. We first draw a different set of $S$ random vectors $\{w_s\}_{s=1}^S$ independently and uniformly from the $D$-dimensional unit hypersphere, and then estimate $w_s^\top\nabla f(x)$ using Eq.~\eqref{eq:inner-product}. Suppose that we have a $r$-degree homogeneous function $g$ of $S$ variables, i.e., $g(az)=a^r g(z)$ where $a\in\mathbb{R}$ and $z\in \mathbb{R}^S$, then we have
\begin{align}
\label{norm_est}
    g\big(\mathbf{W}^\top\nabla f(x)\big)=\|\nabla f(x)\|_2^r \cdot g\big(\mathbf{W}^\top\overline{\nabla f(x)}\big), 
\end{align}
where $\mathbf{W}$ is the collection of the random vectors as $\mathbf{W}=[w_1, ..., w_S]$. In this case, the norm of the gradient $\|\nabla f(x)\|_2$ could be computed easily if both $g\big(\mathbf{W}^\top\nabla f(x)\big)$ and $g\big(\mathbf{W}^\top\overline{\nabla f(x)}\big)$ are available. 
Note that $g\big(\mathbf{W}^\top\nabla f(x)\big)$ can be calculated since each $w_s^\top\nabla f(x)$ is available. 

However, it is non-trivial to obtain the value of $w_s^\top\overline{\nabla f(x)}$ as well as the function value $g\big(\mathbf{W}^\top\overline{\nabla f(x)}\big)$.
Nevertheless, we note that the distribution of $w_s^\top\overline{\nabla f(x)}$ is the same regardless of the direction of $\overline{\nabla f(x)}$, thus we can compute the expectation of the function value $\E\big[g\big(\mathbf{W}^\top\overline{\nabla f(x)}\big)\big]$.
Based on that, we use $\frac{g(\mathbf{W}^\top\nabla f(x))}{\E[g(\mathbf{W}^\top\overline{\nabla f(x)})]}$ as an unbiased estimator of $\|\nabla f(x)\|_2^r$. In particular, we choose $g$ as $g(z)=\frac{1}{S}\sum_{s=1}^S z_s^2$. Then $r=2$, and we have 
\begin{align}
\label{eq:g}
    \E\big[g\big(\mathbf{W}^\top\overline{\nabla f(x)}\big)\big] = \E \big[(w_1^\top\overline{\nabla f(x)})^2]=\overline{\nabla f(x)}^\top \E[w_1w_1^\top]\overline{\nabla f(x)}=\frac{1}{D}.
\end{align}
By plugging Eq.~\eqref{eq:g} into Eq.~\eqref{norm_est}, we can estimate the gradient norm by
\begin{equation}
    \|\nabla f(x)\|_2 \approx \sqrt{\frac{D}{S}\sum_{s=1}^S (w_s^\top\nabla f(x))^2} \approx \sqrt{\frac{D}{S}\sum_{s=1}^S \Big(\frac{f(x+\sigma w_s,y) - f(x,y)}{\sigma}\Big)^2}.
\end{equation}


To save queries, we estimate the gradient norm periodically instead of in every iteration, since usually it does not change very fast in the optimization process.

\subsection{Incorporating the data-dependent prior}
\label{sec:dp}

The proposed P-RGF method is generally flexible to integrate other priors. As a concrete example, we incorporate the commonly used data-dependent prior~\cite{ilyas2018prior} along with the transfer-based prior into our algorithm. The data-dependent prior
is proposed to reduce query complexity, which suggests that we can utilize the structure of the inputs to reduce the input-space dimension without sacrificing much estimation accuracy. This idea has also been adopted in several works~\cite{Chen2017ZOO,tu2018autozoom,guo2018low,brunner2018guessing}. We observe that many works restrict the adversarial perturbations to lie in a linear subspace of the input space, which allows the application of our theoretical framework.

Consider the RGF estimator in Eq.~\eqref{eq:estimate}. 
To leverage the data-dependent prior, suppose $u_i=\mathbf{V}\xi_i$, where $\mathbf{V}=[v_1,v_2,...,v_d]$ is a $D\times d$ matrix ($d<D$), $\{v_j\}_{j=1}^d$ is an orthonormal basis in the $d$-dimensional subspace of the input space, and $\xi_i$ is a random vector sampled from the $d$-dimensional unit hypersphere. When $\xi_i$ is sampled from the uniform distribution, $\mathbf{C}=\frac{1}{d}\sum_{i=1}^d v_i v_i^\top$.

Specifically, we focus on the data-dependent prior in~\cite{ilyas2018prior}. In this method, the random vector $\xi_i$ drawn in $\mathbb{R}^d$ is up-sampled to $u_i$ in $\mathbb{R}^D$ by the nearest neighbor algorithm, where $d<D$. The orthonormal basis $\{v_j\}_{j=1}^d$ can be obtained by first up-sampling the standard basis in $\mathbb{R}^d$ with the same method and then applying normalization.

Now we consider incorporating the data-dependent prior into our algorithm.
Similar to Eq.~\eqref{eq:mix_C}, we let one eigenvector of $\mathbf{C}$ be $v$ to exploit the transfer-based prior, and the others are given by the orthonormal basis in the subspace to exploit the data-dependent prior, as
\begin{equation}
    \mathbf{C}=\lambda v v^\top + \frac{1-\lambda}{d}\sum_{i=1}^d v_i v_i^\top.
    \label{eq:mix_dp_C}
\end{equation}
By plugging Eq.~\eqref{eq:mix_dp_C} into Eq.~\eqref{eq:biased_obj}, we can also obtain the optimal $\lambda$ (proof in Appendix \ref{sec:proof_eq-lambda-2}) as
\begin{align}
\small
    \lambda^* =
    \begin{cases}
        \hfil 0 & \text{if} \; \alpha^2 \leq \dfrac{A^2}{d+2q-2} \\
        \dfrac{A^2(A^2-\alpha^2(d+2q-2))}{A^4+\alpha^4d^2-2A^2\alpha^2(q+dq-1)} & \text{if} \; \dfrac{A^2}{d+2q-2}<\alpha^2 < \dfrac{A^2(2q-1)}{d}\\
        \hfil 1 & \text{if} \; \alpha^2\geq \dfrac{A^2(2q-1)}{d}
    \end{cases},
    \label{eq:lambda-2}
\end{align}
where $A^2=\sum_{i=1}^d (v_i^\top\overline{\nabla f(x)})^2$. Note that $A$ should also be estimated. We use a similar method to the one for estimating $\alpha$ in Sec.~\ref{sec:alpha}, which is provided in Appendix \ref{sec:estimation-A}.

The remaining problem is to construct a random vector $u_i$ satisfying $\E[u_i u_i^\top]=\mathbf{C}$, with $\mathbf{C}$ defined in Eq.~\eqref{eq:mix_dp_C}. In general, it is difficult since $v$ is not orthogonal to the subspace.
To address this issue, we sample $u_i$ in a way that $\E[u_i u_i^\top]$ is a good approximation of $\mathbf{C}$ (explanation in Appendix \ref{sec:proof_eq-sampling-dp}), which is similar to Eq.~\eqref{eq:sample_rv} as
\begin{equation}
    u_i=\sqrt{\lambda}\cdot v+\sqrt{1-\lambda}\cdot\overline{(\mathbf{I}-vv^\top)\mathbf{V}\xi_i},
    \label{eq:sampling-dp}
\end{equation}
where $\xi_i$ is sampled uniformly from the $d$-dimensional unit hypersphere. 

Our algorithm with the data-dependent prior is similar to Algorithm~\ref{alg:biased}.
We first estimate $\alpha$ and $A$, and then calculate $\lambda^*$ by Eq.~\eqref{eq:lambda-2}.
If $\lambda^*=1$, we use the transfer gradient $v$ as the estimate.
If not, we sample $q$ random vectors by Eq.~\eqref{eq:sampling-dp} and obtain the gradient estimation by Eq.~\eqref{eq:estimate}.

\begin{figure}[t]
\centering
\includegraphics[width=1.0\linewidth]{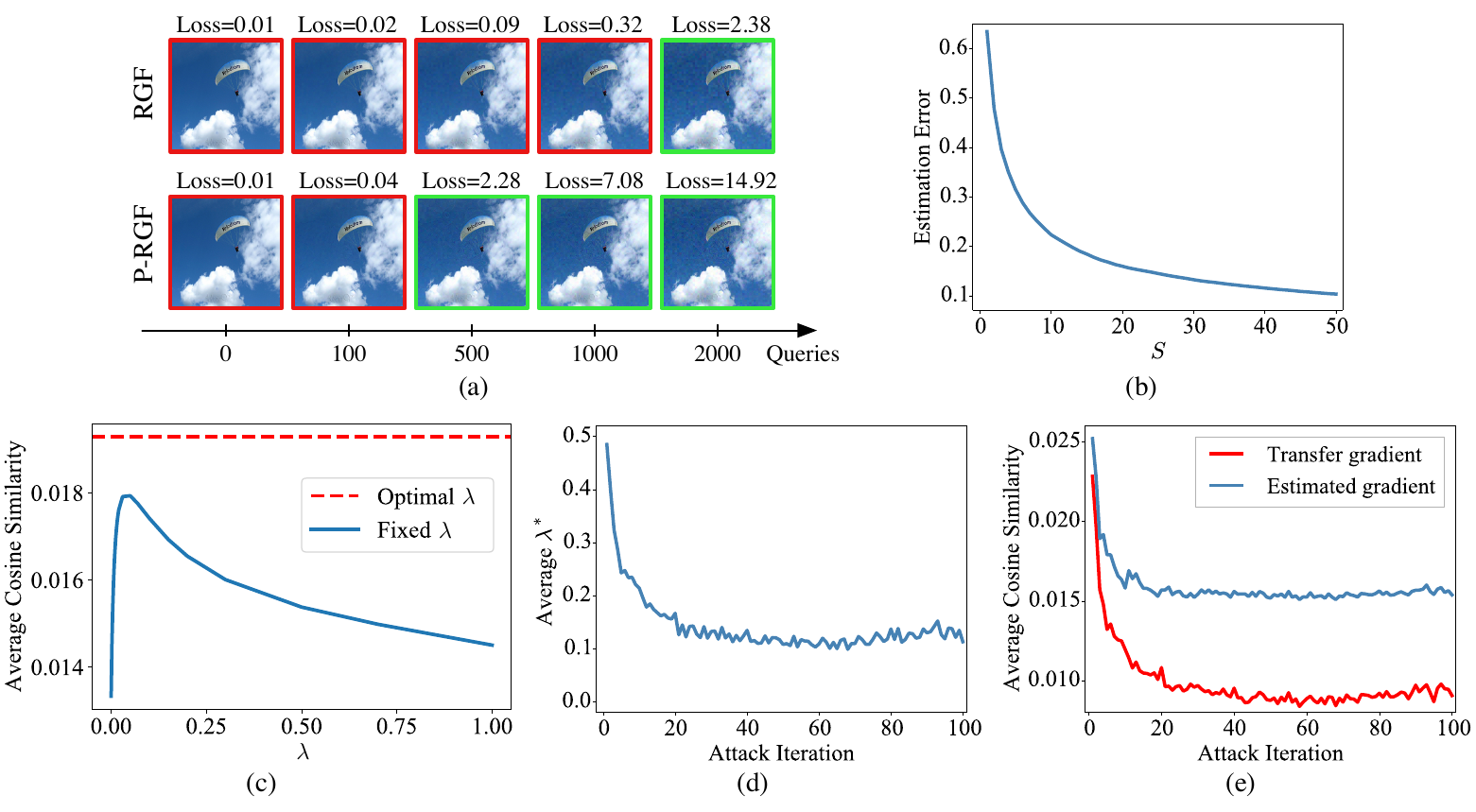}
\caption{(a) The crafted adversarial examples for the Inception-v3~\cite{szegedy2016rethinking} model by RGF and our P-RGF w.r.t. number of queries. We show the cross-entropy loss of each image. The images in the green boxes are successfully misclassified, while those in the red boxes are not. (b) The estimation error of gradient norm with different $S$. (c) The average cosine similarity between the estimated gradient and the true gradient. The estimate is given by our method with fixed $\lambda$ and optimal $\lambda$, respectively. (d) The average $\lambda^*$ across attack iterations. (e) The average cosine similarity between the transfer and the true gradients, and that between the estimated and the true gradients, across attack iterations.}
\label{fig:demo}
\end{figure}

\section{Experiments}
In this section, we present the experimental results to demonstrate the effectiveness of the proposed method on attacking black-box classifiers.\footnote{Our code is available at: \url{https://github.com/thu-ml/Prior-Guided-RGF}.}
We perform untargeted attacks under both the $\ell_2$ and $\ell_\infty$ norms on the ImageNet dataset~\cite{russakovsky2015imagenet}. We choose $1$,$000$ images randomly from the validation set for evaluation.
Due to the space limitation, we leave the results based on the $\ell_\infty$ norm in Appendix \ref{sec:additional-exps}.
The results for both norms are consistent.
For all experiments, we use the ResNet-152 model~\cite{He2016} as the surrogate model to generate the transfer gradient.
We apply the PGD algorithm~\cite{madry2017towards} to generate adversarial examples with the estimated gradient given by each method. We set the perturbation size as $\epsilon=\sqrt{0.001\cdot D}$ and the learning rate as $\eta=2$ in PGD under the $\ell_2$ norm, with images in $[0,1]$.

\begin{table}
  \caption{The experimental results of black-box attacks against Inception-v3, VGG-16, and ResNet-50 under the $\ell_2$ norm. We report the attack success rate (ASR) and the average number of queries (AVG. Q) needed to generate an adversarial example over successful attacks.}
  \label{tab:final-results}
  \centering
  \small
  
  \begin{tabular}{l|cc|cc|cc}
    \hline
    \multirow{2}{*}{Methods} & \multicolumn{2}{c|}{Inception-v3} & \multicolumn{2}{c|}{VGG-16} & \multicolumn{2}{c}{ResNet-50}\\
    \cline{2-7}
    & ASR & AVG. Q & ASR & AVG. Q & ASR & AVG. Q \\
    \hline
    NES~\cite{ilyas2018black} & 95.5\% & 1718 & 98.7\% & 1081 & 98.4\% & 969 \\
    Bandits\textsubscript{T}~\cite{ilyas2018prior} & 92.4\% & 1560 & 94.0\% & 584 & 96.2\% & 1076 \\
    Bandits\textsubscript{TD}~\cite{ilyas2018prior} & 97.2\% & 874 & 94.9\% & 278 & 96.8\% & 512 \\
    AutoZoom~\cite{tu2018autozoom} & 85.4\% & 2443 & 96.2\% & 1589 & 94.8\% & 2065 \\
    \hline
    RGF & 97.7\% & 1309 & \bf99.8\% & 749 & \bf99.6\% & 673 \\
    P-RGF ($\lambda=0.5$) & 96.5\% & 1119 & 97.8\% & 710 & 98.7\% & 635 \\
    P-RGF ($\lambda=0.05$) & 97.8\% & 1021 & 99.7\% & 624 & 99.3\% & 511 \\
    P-RGF ($\lambda^*$) & \bf98.1\% & \bf745 & 99.6\% & \bf331 & \bf99.6\% & \bf265 \\
    \hline
    RGF\textsubscript{D} & \bf99.1\% & 910 & \bf100.0\% & 372 & 99.7\% & 429 \\
    P-RGF\textsubscript{D} ($\lambda=0.5$) & 98.2\% & 1047 & 99.7\% & 634 & 99.5\% & 552 \\
    P-RGF\textsubscript{D} ($\lambda=0.05$) & \bf99.1\% & 754 & 99.9\% & 359 & \bf99.8\% & 379 \\
    P-RGF\textsubscript{D} ($\lambda^*$) & \bf99.1\% & \bf649 & 99.8\% & \bf250 & 99.6\% & \bf232 \\
    \hline
  \end{tabular}
\end{table}

\subsection{Performance of gradient estimation}
\label{sec:estimation}

In this section, we conduct several experiments to show the performance of gradient estimation. All experiments in this section are performed on the Inception-v3~\cite{szegedy2016rethinking} model.

First, we show the performance of gradient norm estimation in Sec.~\ref{sec:alpha}. The gradient norm (or cosine similarity) is easier to estimate than the true gradient since it's a scalar value. Fig.~\ref{fig:demo}(b) shows the estimation error of the gradient norm, defined as the (normalized) RMSE---$\sqrt{\mathbb{E}\big(\frac{\widehat{\|\nabla f(x)\|_2} - \|\nabla f(x)\|_2 }{\|\nabla f(x)\|_2}\big)^2}$, w.r.t. the number of queries $S$, where $\|\nabla f(x)\|_2$ is the true norm and $\widehat{\|\nabla f(x)\|_2}$ is the estimated one. We choose $S=10$ in all experiments to reduce the number of queries while the estimation error is acceptable. We also estimate the gradient norm every $10$ attack iterations in all experiments to reduce the number of queries, since usually its value is relatively stable in the optimization process.

Second, we verify the effectiveness of the derived optimal $\lambda$ (i.e., $\lambda^*$) in Eq.~\eqref{eq:lambda-1} for gradient estimation, compared with any fixed $\lambda\in[0,1]$. We perform attacks against Inception-v3 using P-RGF with $\lambda^*$, and calculate the cosine similarity between the estimated gradient and the true gradient. We calculate $\lambda^*$ using the estimated $\alpha$ in Sec.~\ref{sec:alpha} instead of its true value. Meanwhile, along the PGD updates, we also use fixed $\lambda$ to get gradient estimates, and calculate the corresponding cosine similarities. Note that $\lambda^*$ does not correspond to any fixed value, since it varies during iterations. 

The average cosine similarities of different values of $\lambda$ are shown in Fig.~\ref{fig:demo}(c).
It can be observed that when a suitable value of $\lambda$ is chosen, the proposed P-RGF provides a better gradient estimate than both the original RGF method with uniform distribution (when $\lambda=\frac{1}{D}\approx 0$) and the transfer gradient (when $\lambda=1$). Adopting $\lambda^*$ brings further improvement upon any fixed $\lambda$, demonstrating the applicability of our theoretical framework. 

Finally, we show the average $\lambda^*$ over all images w.r.t. attack iterations in Fig.~\ref{fig:demo}(d). It shows that $\lambda^*$ decreases along with the iterations. Fig.~\ref{fig:demo}(e) shows the average cosine similarity between the transfer and the true gradients, and that between the estimated and the true gradients, across iterations. The results show that the transfer gradient is useful at beginning, and becomes less useful along with the iterations. However, the estimated gradient can remain higher cosine similarity with the true gradient, which facilitates the adversarial attacks consequently. The results also prove that we need to use the adaptive $\lambda^*$ in different attack iterations.

\begin{figure}[t]
\centering
\includegraphics[width=1\linewidth]{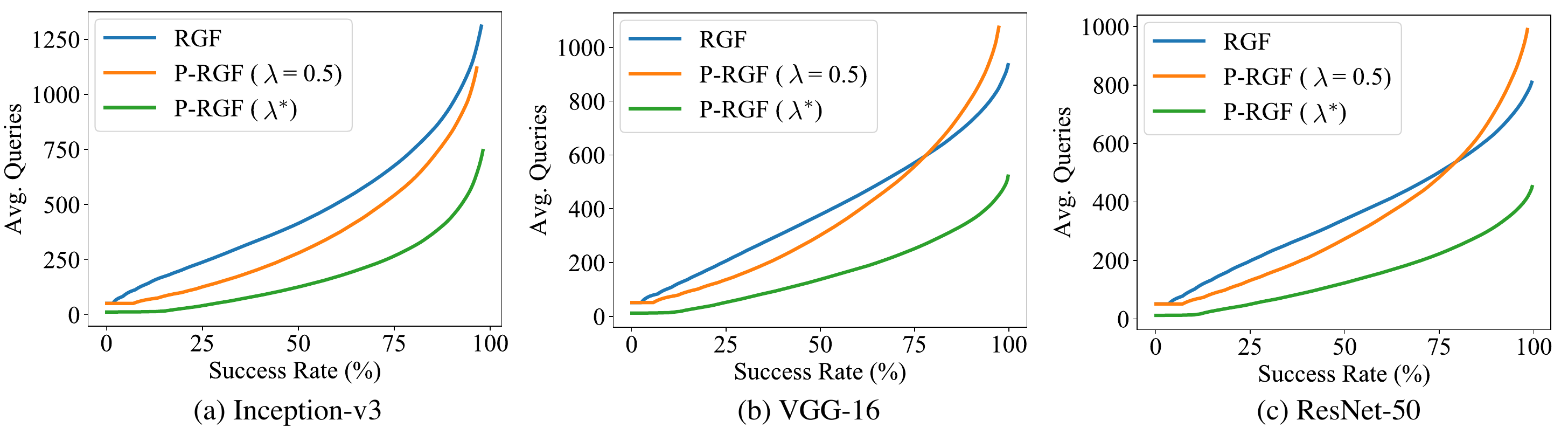}
\caption{We show the average number of queries per successful image at any desired success rate.}
\label{fig:curve}
\end{figure}

\subsection{Results of black-box attacks on normal models}
\label{sec:results}
In this section, we perform attacks against three normally trained models, which are Inception-v3~\cite{szegedy2016rethinking}, VGG-16~\cite{simonyan2014very}, and ResNet-50~\cite{He2015}. 
We compare the proposed prior-guided random gradient-free (P-RGF) method with two baseline methods. The first is the original RGF method with uniform sampling. The second is the P-RGF method with a fixed $\lambda$, which is set to $0.5$ or $0.05$.
In these methods, we set the number of queries as $q=50$, and the sampling variance as $\sigma=0.0001\cdot\sqrt{D}$.
We also incorporate the data-dependent prior into these three methods for comparison (which are denoted by adding a subscript ``D''). We set the dimension of the subspace as $d=50\times50\times3$. Besides, our method is compared with the state-of-the-art attack methods, including the natural evolution strategy (NES)~\cite{ilyas2018black}, bandit optimization methods (Bandits\textsubscript{T} and Bandits\textsubscript{TD})~\cite{ilyas2018prior}, and AutoZoom~\cite{tu2018autozoom}.
For all methods, we restrict the maximum number of queries for each image to be $10$,$000$.
We report a successful attack if a method can generate an adversarial example within $10$,$000$ queries and the size of perturbation is smaller than the budget (i.e., $\epsilon=\sqrt{0.001\cdot D}$).

We show the success rate of black-box attacks and the average number of queries needed to generate an adversarial example over the successful attacks in Table~\ref{tab:final-results}. It can be seen that our method generally leads to higher attack success rates and requires much fewer queries than other methods. Using a fixed $\lambda$ cannot give a satisfactory result, which demonstrates the necessity of using the optimal $\lambda$ in our method. The results also show that the data-dependent prior is orthogonal to the proposed transfer-based prior, since integrating the data-dependent prior leads to better results.
We show an example of attacks in Fig.~\ref{fig:demo}(a).
Fig.~\ref{fig:curve} shows the average number of queries over successful images by reaching a desired success rate. Our method is much more query-efficient than baseline methods.

\begin{table}[t]
  \caption{The experimental results of black-box attacks against JPEG compression~\cite{Guo2017Countering}, randomization~\cite{Xie2018Mitigating}, and guided denoiser~\cite{Liao2017Defense} under the $\ell_2$ norm. We report the attack success rate (ASR) and the average number of queries (AVG. Q) needed to generate an adversarial example over successful attacks.}
  \label{tab:defense-results}
  \centering
  \small
  
  \begin{tabular}{l|cc|cc|cc}
    \hline
    \multirow{2}{*}{Methods} & \multicolumn{2}{c|}{JPEG Compression~\cite{Guo2017Countering}} & \multicolumn{2}{c|}{Randomization~\cite{Xie2018Mitigating}} & \multicolumn{2}{c}{ Guided Denoiser~\cite{Liao2017Defense}}\\
    \cline{2-7}
    & ASR & AVG. Q & ASR & AVG. Q & ASR & AVG. Q \\
    \hline
    NES~\cite{ilyas2018black} & 47.3\% & 3114 & 23.2\% & 3632 & 48.0\% & 3633 \\
    SPSA~\cite{uesato2018adversarial} & 40.0\% & 2744 & 9.6\% & 3256 & 46.0\% & 3526 \\
    \hline
    RGF & 41.5\% & 3126 & 19.5\% & 3259 & 50.3\% & 3569 \\
    P-RGF & \bf61.4\% & \bf2419 & \bf60.4\% & \bf2153 & \bf51.4\% & \bf2858 \\
    \hline
    RGF\textsubscript{D} & 70.4\% & 2828 & 54.9\% & 2819 & 83.7\% & 2230 \\
    P-RGF\textsubscript{D} & \bf81.1\% & \bf2120 & \bf82.3\% & \bf1816 & \bf89.6\% & \bf1784 \\
    \hline
  \end{tabular}
\end{table}

\subsection{Results of black-box attacks on defensive models}
We further validate the effectiveness of the proposed method on attacking several defensive models, including JPEG compression~\cite{Guo2017Countering}, randomization~\cite{Xie2018Mitigating}, and guided denoiser~\cite{Liao2017Defense}.
We utilize the Inception-v3 model as the backbone classifier for the JPEG compression and randomization defenses.
We compare P-RGF with RGF, NES~\cite{ilyas2018black}, and SPSA~\cite{uesato2018adversarial}.
The experimental settings are the same with those of attacking the normal models in Sec.~\ref{sec:results}.
In our method, we use a smoothed version of the transfer gradient~\cite{dong2019evading} as the transfer-based prior for black-box attacks, since the smoothed transfer gradient is better to defeat defensive models.
The results in Table~\ref{tab:defense-results} also demonstrate the superiority of our method for attacking the defensive models. Our method leads to much higher attack success rates than other methods ($20\%\sim40\%$ improvements in many cases), and also reduces the query complexity.

\section{Conclusion}

In this paper, we proposed a prior-guided random gradient-free (P-RGF) method to utilize the transfer-based prior for improving black-box adversarial attacks.
Our method appropriately integrated the transfer gradient of a surrogate white-box model by the derived optimal coefficient.
The experimental results consistently demonstrate the effectiveness of our method, which requires much fewer queries to attack black-box models with higher success rates.

\section*{Acknowledgements}
This work was supported by the National Key Research and Development Program of China (No. 2017YFA0700904), NSFC Projects (Nos. 61620106010, 61621136008, 61571261), Beijing NSF Project (No. L172037), Beijing Academy of Artificial Intelligence (BAAI), Tiangong Institute for Intelligent Computing, the JP Morgan Faculty Research Program and the NVIDIA NVAIL Program with GPU/DGX Acceleration.

\clearpage
\small
\bibliographystyle{plainnat}
\bibliography{reference}

\clearpage
\normalsize
\setcounter{table}{2}
\numberwithin{equation}{section}

\appendix

\section{Proofs}
We provide the proofs in this section.
\subsection{Proof of Theorem \protect\ref{the:1}}
\label{sec:proof_the-1}
\textbf{Theorem 1.} \textit{If $f$ is differentiable at $x$, the loss of the RGF estimator $\hat{g}$ is
\begin{equation*}
    \lim_{\sigma\to 0}L(\hat{g})=\|\nabla f(x)\|_2^2-\frac{\big(\nabla f(x)^\top\mathbf{C}\nabla f(x)\big)^2}{(1-\frac{1}{q})\nabla f(x)^\top\mathbf{C}^2\nabla f(x)+\frac{1}{q}\nabla f(x)^\top\mathbf{C}\nabla f(x)},
\end{equation*}
where $\sigma$ is the sampling variance, $\mathbf{C}=\E[u_iu_i^\top]$ with $u_i$ being the random vector, $\|u_i\|_2=1$, and $q$ is the number of random vectors as in Eq.~\eqref{eq:estimate}.}
\begin{remark}
Rigorously speaking, we assume $\nabla f(x)^\top \mathbf{C} \nabla f(x)\neq 0$ in the statement of the theorem (and also in the proof), since when $\nabla f(x)^\top \mathbf{C} \nabla f(x)\neq 0$, both the numerator and the denominator of the fraction above are zero. When $\nabla f(x)^\top \mathbf{C} \nabla f(x)=0$, $u_i^\top \nabla f(x)=0$ holds almost surely, which implies that $L(\hat{g})=\|\nabla f(x)\|^2$ regardless of the value of $\sigma$. In fact, this case will not happen almost surely. In the setting of black-box attacks, we cannot even design a $\mathbf{C}$ with trace 1 such that $\nabla f(x)^\top \mathbf{C} \nabla f(x)= 0$ since $\nabla f(x)$ is unknown.
\end{remark}
\begin{proof}
First, we derive $L(\hat{g})$ based on the assumption that the single estimate $\hat{g}_i$ in Eq.~\eqref{eq:estimate} is equal to $u_i^\top\nabla f(x) \cdot u_i$, which will hold when $f$ is locally linear.
\begin{lemma}
Assume that the single estimate $\hat{g}_i$ in Eq.~\eqref{eq:estimate} is equal to $u_i^\top\nabla f(x) \cdot u_i$. We have
\begin{equation}
    L(\hat{g})=\|\nabla f(x)\|_2^2-\frac{(\nabla f(x)^\top\mathbf{C}\nabla f(x))^2}{(1-\frac{1}{q})\nabla f(x)^\top\mathbf{C}^2\nabla f(x)+\frac{1}{q}\nabla f(x)^\top\mathbf{C}\nabla f(x)}.
    \label{eq:app_biased_obj}
\end{equation}
\end{lemma}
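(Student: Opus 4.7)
The plan is to reduce the lemma to a single one-dimensional optimization followed by two moment computations. Starting from the definition $L(\hat g)=\min_{b\geq 0}\E\|\nabla f(x)-b\hat g\|_2^2$, expand the squared norm as
\begin{equation*}
\E\|\nabla f(x)-b\hat g\|_2^2 = \|\nabla f(x)\|_2^2 - 2b\,\nabla f(x)^\top\E[\hat g] + b^2\,\E\|\hat g\|_2^2.
\end{equation*}
This is a convex quadratic in $b$ whose unconstrained minimizer is $b^\star = \nabla f(x)^\top\E[\hat g] / \E\|\hat g\|_2^2$. Since $\mathbf{C}=\E[u_iu_i^\top]$ is positive semi-definite, it will turn out that $\nabla f(x)^\top\E[\hat g]=\nabla f(x)^\top\mathbf{C}\nabla f(x)\geq 0$, so $b^\star\geq 0$ and the non-negativity constraint is inactive. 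Substituting $b^\star$ back yields the compact form $L(\hat g)=\|\nabla f(x)\|_2^2 - (\nabla f(x)^\top\E[\hat g])^2/\E\|\hat g\|_2^2$, and it only remains to compute the mean and the second moment of $\hat g$.

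For the mean, linearity of expectation together with the hypothesis $\hat g_i=(u_i^\top\nabla f(x))u_i$ gives $\E[\hat g]=\E[u_iu_i^\top]\nabla f(x)=\mathbf{C}\nabla f(x)$, so that $\nabla f(x)^\top\E[\hat g]=\nabla f(x)^\top\mathbf{C}\nabla f(x)$, which is precisely the numerator appearing in the claimed expression.

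For the second moment, I would write $\E\|\hat g\|_2^2 = q^{-2}\sum_{i,j}\E[\hat g_i^\top\hat g_j]$ and split into diagonal and off-diagonal contributions. On the diagonal, using $\|u_i\|_2=1$ gives $\E\|\hat g_i\|_2^2 = \E[(u_i^\top\nabla f(x))^2\|u_i\|_2^2]=\E[(u_i^\top\nabla f(x))^2]=\nabla f(x)^\top\mathbf{C}\nabla f(x)$. For $i\neq j$, independence of $u_i$ and $u_j$ lets the expectation factor as $\E[\hat g_i]^\top\E[\hat g_j] = \|\mathbf{C}\nabla f(x)\|_2^2 = \nabla f(x)^\top\mathbf{C}^2\nabla f(x)$. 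Counting $q$ diagonal terms and $q(q-1)$ off-diagonal terms and dividing by $q^2$ produces
\begin{equation*}
\E\|\hat g\|_2^2 = \tfrac{1}{q}\,\nabla f(x)^\top\mathbf{C}\nabla f(x) + \bigl(1-\tfrac{1}{q}\bigr)\,\nabla f(x)^\top\mathbf{C}^2\nabla f(x),
\end{equation*}
which is exactly the denominator in the target formula. Plugging this into the simplified expression for $L(\hat g)$ from the first paragraph gives the lemma.

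I do not expect a real obstacle: this is essentially a routine second-moment calculation once the minimization over $b$ is carried out up front. The only points that deserve care are (i) checking that $b^\star\geq 0$ so the unconstrained optimum is feasible, which follows from positive semi-definiteness of $\mathbf{C}$, and (ii) acknowledging the degenerate case $\nabla f(x)^\top\mathbf{C}\nabla f(x)=0$ noted in the remark before the lemma: there $u_i^\top\nabla f(x)=0$ almost surely so $\hat g=0$ and $L(\hat g)=\|\nabla f(x)\|_2^2$ trivially, matching the limit of the displayed expression once the indeterminate $0/0$ is resolved. The subsequent Theorem 1 then follows by replacing $\hat g_i$ by its $\sigma\to 0$ limit and invoking differentiability of $f$ at $x$.
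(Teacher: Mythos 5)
Your proposal is correct and follows essentially the same route as the paper's proof: minimize the quadratic in $b$ (noting $\nabla f(x)^\top\E[\hat g]\geq 0$ so the constraint $b\geq 0$ is inactive), compute $\E[\hat g]=\mathbf{C}\nabla f(x)$ and $\E\|\hat g_i\|_2^2=\nabla f(x)^\top\mathbf{C}\nabla f(x)$ using $\|u_i\|_2=1$, and combine. Your diagonal/off-diagonal expansion of $\E\|\hat g\|_2^2$ is just the paper's bias--variance decomposition written out term by term; both hinge on independence of the $u_i$.
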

\begin{proof}
First, we have \[\E\|\nabla f(x)-b\hat{g}\|_2^2=\|\nabla f(x)\|_2^2-2b\nabla f(x)^\top\E[\hat{g}]+b^2\E\|\hat{g}\|_2^2.\]
We have $\nabla f(x)^\top \E[\hat{g}]=\nabla f(x)^\top \E[\hat{g}_i]=\E[\nabla f(x)^\top u_i u_i^\top \nabla f(x)]=\E[(\nabla f(x)^\top u_i)^2]\geq 0$. Hence
\begin{equation}
    L(\hat{g})=\min_{b\geq0} \E\|\nabla f(x)-b\hat{g}\|_2^2 = \min_b \E\|\nabla f(x)-b\hat{g}\|_2^2 = \|\nabla f(x)\|_2^2-\frac{(\nabla f(x)^\top \E[\hat{g}])^2}{\E\|\hat{g}\|_2^2}.
    \label{eq:ghat_abstract}
\end{equation}
Since $\hat{g}_i=u_i^\top \nabla f(x) \cdot u_i$, and $u_i^\top u_i\equiv 1$, we have
\begin{align*}
    \E[\hat{g}_i] &= \mathbf{C}\nabla f(x), \\
    \E\|\hat{g}_i\|_2^2 &= \E[\hat{g}_i^\top \hat{g}_i] \\ &= \E[\nabla f(x)^\top u_i u_i^\top u_i u_i^\top \nabla f(x)] \\ &= \nabla f(x)^\top\E[u_i (u_i^\top u_i) u_i^\top]\nabla f(x) \\&= \nabla f(x)^\top\E[u_i  u_i^\top]\nabla f(x) \\&= \nabla f(x)^\top\mathbf{C}\nabla f(x).
\end{align*}
Given $\E[\hat{g}_i]$ and $\E\|\hat{g}_i\|^2$, the corresponding moments of $\hat{g}$ can be computed as
\begin{align}
    \E[\hat{g}]&=\E[\hat{g}_i] \label{eq:ghat_as_gi}\\
    &=\mathbf{C}\nabla f(x), \nonumber\\
    \E\|\hat{g}\|_2^2&=\E\|\hat{g}-\E[\hat{g}]\|_2^2+\|\E[\hat{g}]\|_2^2 \nonumber\\
    &=\frac{1}{q}\E\|\hat{g}_i-\E[\hat{g}_i]\|_2^2+\|\E[\hat{g}_i]\|_2^2 \nonumber\\
    &=\frac{1}{q}\E\|\hat{g}_i\|_2^2+(1-\frac{1}{q})\|\E[\hat{g}_i]\|_2^2 \label{eq:ghat_as_gi_2}\\
    &=(1-\frac{1}{q})\nabla f(x)^\top\mathbf{C}^2\nabla f(x)+\frac{1}{q}\nabla f(x)^\top\mathbf{C}\nabla f(x). \nonumber
\end{align}
Plug them into Eq.~\eqref{eq:ghat_abstract} and we complete the proof.
\end{proof}
Next, we prove that if $f$ is not locally linear, as long as it is differentiable at $x$, then by picking a sufficient small $\sigma$, the loss tends to be that of the local linear approximation.
\begin{lemma}\label{thm:differentiable}
If $f$ is differentiable at $x$, letting $L_0$ denote the right-hand side of Eq.~\eqref{eq:app_biased_obj}, then we have
\begin{equation*}
    \lim_{\sigma\to 0}L(\hat{g})=L_0.
\end{equation*}
\end{lemma}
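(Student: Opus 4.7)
The plan is to reduce the general differentiable case to the locally-linear case handled by the previous lemma, by showing that the finite-difference estimator $\hat{g}_i^{(\sigma)} = \frac{f(x+\sigma u_i,y)-f(x,y)}{\sigma}\cdot u_i$ converges uniformly over the unit sphere to $\tilde{g}_i := u_i^\top \nabla f(x)\cdot u_i$ as $\sigma\to 0$, and then passing this convergence through the expectations that define $L(\hat g)$.

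First I would exploit the definition of differentiability at $x$ in the form: for every $\epsilon>0$ there exists $\delta>0$ such that $|f(x+h)-f(x)-\nabla f(x)^\top h|\le \epsilon\|h\|$ whenever $\|h\|<\delta$. Setting $h=\sigma u_i$ and using $\|u_i\|_2=1$ gives, for all $\sigma<\delta$ and all unit $u_i$,
\[
\left|\frac{f(x+\sigma u_i,y)-f(x,y)}{\sigma}-u_i^\top\nabla f(x)\right|\le \epsilon.
\]
Multiplying by $u_i$ yields $\|\hat g_i^{(\sigma)}-\tilde g_i\|_2\le \epsilon$ uniformly, and averaging gives $\|\hat g^{(\sigma)}-\hat g\|_2\le \epsilon$ almost surely, where $\hat g$ denotes the locally-linear version of the estimator treated in the previous lemma. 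In particular $\hat g^{(\sigma)}$ is uniformly bounded by $\|\nabla f(x)\|_2+\epsilon$, so the bounded convergence theorem gives $\E[\hat g^{(\sigma)}]\to\E[\hat g]$ and $\E\|\hat g^{(\sigma)}\|_2^2\to\E\|\hat g\|_2^2$ as $\sigma\to 0$.

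Finally I would push these moment limits through the minimization in $b$. Writing
\[
L(\hat g^{(\sigma)})=\min_{b\ge 0}\bigl(\|\nabla f(x)\|_2^2-2b\,\nabla f(x)^\top\E[\hat g^{(\sigma)}]+b^2\E\|\hat g^{(\sigma)}\|_2^2\bigr),
\]
the standing remark assumes $\nabla f(x)^\top \mathbf{C}\nabla f(x)>0$, so in the limit $\nabla f(x)^\top\E[\hat g]=\nabla f(x)^\top \mathbf{C}\nabla f(x)>0$ and $\E\|\hat g\|_2^2>0$ as well. By continuity both remain strictly positive for all sufficiently small $\sigma$, so the unconstrained minimizer $b^*=\nabla f(x)^\top\E[\hat g^{(\sigma)}]/\E\|\hat g^{(\sigma)}\|_2^2$ is nonnegative and the constraint $b\ge 0$ is inactive. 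Hence $L(\hat g^{(\sigma)})=\|\nabla f(x)\|_2^2-\bigl(\nabla f(x)^\top\E[\hat g^{(\sigma)}]\bigr)^2/\E\|\hat g^{(\sigma)}\|_2^2$, a continuous function of the two moments that both converge to their locally-linear limits. Plugging in the moment formulas $\E[\hat g]=\mathbf{C}\nabla f(x)$ and $\E\|\hat g\|_2^2=(1-\tfrac1q)\nabla f(x)^\top\mathbf{C}^2\nabla f(x)+\tfrac1q\nabla f(x)^\top\mathbf{C}\nabla f(x)$ computed in the previous lemma yields $\lim_{\sigma\to 0}L(\hat g^{(\sigma)})=L_0$.

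The main obstacle, such as it is, lies in the first step: plain differentiability only gives pointwise convergence of the directional difference quotient for each fixed $u$, whereas here I need convergence that is uniform over $u_i$ on the unit sphere so that the expectation (over the distribution of $u_i$) can be moved under the limit without any extra smoothness of $f$. The observation that instantiating the $o(\|h\|)$ remainder in the definition of differentiability at $h=\sigma u_i$ with $\|u_i\|=1$ automatically produces a uniform $o(1)$ bound is what makes the argument go through cleanly; everything after that is continuity of elementary algebraic expressions in two moments.
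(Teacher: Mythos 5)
Your proposal is correct and follows essentially the same route as the paper's proof: both exploit the fact that differentiability at $x$ makes the directional difference quotient converge uniformly over the unit sphere, deduce convergence of $\E[\hat g]$ and $\E\|\hat g\|_2^2$ to their locally-linear values, and then pass the limit through the closed-form minimization over $b$. Your explicit check that the constraint $b\ge 0$ is inactive for small $\sigma$ (using the standing assumption $\nabla f(x)^\top\mathbf{C}\nabla f(x)\neq 0$) is a point the paper treats only implicitly, but it does not change the argument.
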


\begin{proof}

Let $\hat{g}_i'=u_i^\top\nabla f(x)\cdot u_i$, $\hat{g}'=\frac{1}{q}\sum_{i=1}^q \hat{g}_i'$. Then $L_0=L(\hat{g}')$. By Eq.~\eqref{eq:ghat_abstract}, Eq.~\eqref{eq:ghat_as_gi} and Eq.~\eqref{eq:ghat_as_gi_2}, it suffices to prove $\lim_{\sigma\to 0}\E[\hat{g}_i]=\E[\hat{g}_i']$ and $\lim_{\sigma\to 0}\E\|\hat{g}_i\|_2^2=\E\|\hat{g}_i'\|_2^2$.

For clarity, we redefine the notation: We omit the subscript $i$, make the dependence of $\hat{g}_i$ on $\sigma$ explicit (let $\hat{g}_\sigma$ denote $\hat{g}_i$), and let $\hat{g}_0$ denote $\hat{g}_i'$. Then we omit the hat in $\hat{g}$. That is, let  $g_0\triangleq u^\top \nabla f(x)\cdot u$ and $g_\sigma\triangleq \frac{f(x+\sigma u)-f(x)}{\sigma}\cdot u$, where $u$ is sampled uniformly from the unit hypersphere. Then we want to prove $\lim_{\sigma\to 0}\E[g_\sigma]=\E[g_0]$ and $\lim_{\sigma\to 0}\E\|g_\sigma\|_2^2=\E\|g_0\|_2^2$.

Since $f$ is differentiable at $x$, we have
\begin{equation}
    \lim_{\sigma\to 0}\sup_{\|u\|_2=1}\Big|\frac{f(x+\sigma u)-f(x)}{\sigma}- u^\top\nabla f(x)\Big|=0.
    \label{eq:differentiable}
\end{equation}
Since $\|u\|_2\equiv 1$, we have
\begin{align*}
    \lim_{\sigma\to 0}\E\|g_\sigma-g_0\|_2&\leq \lim_{\sigma\to 0}\sup_{\|u\|_2=1}\Big|\frac{f(x+\sigma u)-f(x)}{\sigma}-u^\top\nabla f(x)\Big|=0,\\
    \lim_{\sigma\to 0}\E\|g_\sigma-g_0\|_2^2&\leq \lim_{\sigma\to 0}\sup_{\|u\|_2=1}\big|\frac{f(x+\sigma u)-f(x)}{\sigma}-u^\top\nabla f(x)\big|^2=0.
\end{align*}
Applying Jensen's inequality to convex function $\|\cdot\|_2$, we have $\|\E[g_\sigma]-\E[g_0]\|_2\leq \E\|g_\sigma-g_0\|_2$. Since $\lim_{\sigma\to 0}\E\|g_\sigma-g_0\|_2=0$, we have $\lim_{\sigma\to 0}\E[g_\sigma]=\E[g_0]$.

Since $\big|\|g_\sigma\|_2-\|g_0\|_2\big|\leq\|g_\sigma-g_0\|_2$, $\lim_{\sigma\to 0}\E\|g_\sigma-g_0\|_2=0$ and $\lim_{\sigma\to 0}\E\|g_\sigma-g_0\|_2^2=0$, we have $\lim_{\sigma\to 0}\E\big|\|g_\sigma\|_2-\|g_0\|_2\big|=0$ and $\lim_{\sigma\to 0}\E(\|g_\sigma\|_2-\|g_0\|_2)^2=0$. Also, we have $\|g_0\|_2\leq \|\nabla f(x)\|_2$. Hence, we have
\begin{align*}
    \lim_{\sigma\to 0}\big|\E\|g_\sigma\|_2^2-\E\|g_0\|_2^2\big|
    &\leq \lim_{\sigma\to 0}\E\big|\|g_\sigma\|_2^2-\|g_0\|_2^2\big| \\
    &=\lim_{\sigma\to 0}\E\Big[\big|\|g_\sigma\|_2-\|g_0\|_2\big|\big(\|g_\sigma\|_2+\|g_0\|_2\big)\Big] \\
    &\leq \lim_{\sigma\to 0}\E\Big[\big(\|g_\sigma\|_2-\|g_0\|_2\big)^2+2\|g_0\|_2\big|\|g_\sigma\|_2-\|g_0\|_2\big|\Big] \\
    &\leq \lim_{\sigma\to 0}\E\Big[\big(\|g_\sigma\|_2-\|g_0\|_2\big)^2+2\|\nabla f(x)\|_2\big|\|g_\sigma\|_2-\|g_0\|_2\big|\Big] \\
    &=0.
\end{align*}
The proof is complete.
\end{proof}
By combining the two lemmas above, our proof for the theorem is complete.
\end{proof}

\subsection{Proof of Eq.~(\protect\ref{eq:sample_rv})}
\label{sec:proof_eq-sample-rv}
Suppose $v$ is a fixed random vector and $\|v\|_2=1$. Let the $D$-dimensional random vector $u$ be
\begin{equation*}
    u=\sqrt{\lambda}\cdot v+\sqrt{1-\lambda}\cdot\overline{(\mathbf{I}-vv^\top)\xi},
\end{equation*}
where $\xi$ is sampled uniformly from the unit hypersphere. We want to prove that
\begin{equation*}
\E[u u^\top]=\lambda v v^\top + \frac{1-\lambda}{D-1}(\mathbf{I}-v v^\top).
\end{equation*}
\begin{proof}
Let $r\triangleq \overline{(\mathbf{I}-vv^\top)\xi}$. We choose an orthonormal basis $\{v_1,...,v_D\}$ of $\mathbb{R}^D$ such that $v_1=v$. Then $\xi$ can be written as $\xi=\sum_{i=1}^D a_i v_i$, where $a=(a_1,...,a_D)^\top$ is sampled uniformly from the unit hypersphere. Hence $(\mathbf{I}-vv^\top)\xi=\sum_{i=2}^D a_i v_i$, and $r=\frac{\sum_{i=2}^D a_i v_i}{\sqrt{\sum_{i=2}^D a_i^2}}$. Let $b_i=\frac{a_i}{\sqrt{\sum_{i=2}^D a_i^2}}$ for $i=2,3,...,D$, then $b=(b_2,b_3,...,b_D)^\top$ is sampled uniformly from the $(D-1)$-dimensional unit hypersphere, and $r=\sum_{i=2}^D b_i v_i$. Hence $\E[r]=0$. To compute $\E[r r^\top]$, we need a lemma first.
\begin{lemma}
Suppose $d$ is a positive integer, $u=\sum_{i=1}^d a_i v_i$ where $a=(a_1,...,a_d)^\top$ is sampled uniformly from the $d$-dimensional unit hypersphere, then $\E[uu^\top]=\frac{1}{d}\sum_{i=1}^d v_i v_i^\top$.
\label{lem:covariance}
\end{lemma}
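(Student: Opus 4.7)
The plan is to expand $\E[uu^\top]$ by linearity of expectation as a double sum $\sum_{i=1}^d\sum_{j=1}^d \E[a_i a_j]\,v_i v_j^\top$, which reduces the lemma to establishing the single identity $\E[a_i a_j]=\frac{1}{d}\delta_{ij}$. Note that I do not need to assume anything about the vectors $v_i$ (orthogonality, normalization, etc.) for this reduction, since the entire stochastic content sits in the coefficients $a_i$.

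For the off-diagonal case $i\neq j$, I would argue by symmetry. The uniform distribution on the unit hypersphere is invariant under every orthogonal transformation of $\mathbb{R}^d$; in particular, the reflection that flips the sign of the $j$-th coordinate (and leaves all others fixed) preserves this distribution. Under that reflection $a_i a_j$ maps to $-a_i a_j$, so $\E[a_i a_j]=-\E[a_i a_j]$, which forces $\E[a_i a_j]=0$. For the diagonal case $i=j$, I would invoke permutation symmetry: any coordinate permutation is also an orthogonal transformation preserving the uniform distribution on the sphere, so $\E[a_i^2]$ is the same for all $i$. Combined with the pointwise identity $\sum_{i=1}^d a_i^2=1$ (which holds almost surely, since $a$ lies on the unit sphere), linearity of expectation gives $d\cdot\E[a_1^2]=1$, hence $\E[a_i^2]=\frac{1}{d}$.

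I do not anticipate any serious obstacle in this argument; the only step that deserves a careful sentence is the symmetry invariance of the uniform measure on the sphere, which I would justify simply by noting that the sphere is preserved by orthogonal maps and the uniform measure is the unique rotation-invariant probability measure on it (so any orthogonal pushforward yields the same distribution). Assembling the pieces, $\E[uu^\top]=\sum_{i,j}\E[a_i a_j]v_i v_j^\top=\frac{1}{d}\sum_{i=1}^d v_i v_i^\top$, completing the proof.
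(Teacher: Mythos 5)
Your proposal is correct and follows essentially the same route as the paper's proof: expand $\E[uu^\top]$ into the double sum $\sum_{i,j}\E[a_i a_j]\,v_i v_j^\top$, kill the off-diagonal terms by symmetry, and use $\sum_{i=1}^d a_i^2=1$ together with exchangeability to get $\E[a_i^2]=\frac{1}{d}$. You merely make the symmetry invocations (sign-flip and permutation invariance of the uniform measure on the sphere) more explicit than the paper does.
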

\begin{proof}
$\E[uu^\top] = \E[(\sum_{i=1}^d a_i v_i)(\sum_{j=1}^d a_j v_j^\top)] = \sum_{i=1}^d \sum_{j=1}^d v_i v_j^\top\E[a_i a_j]$. By symmetry, we have $\E[a_i a_j]=0$ when $i\neq j$, and $\E[a_i^2]=\E[a_j^2]$ for any $i, j$. Since $\sum_{i=1}^d a_i^2=1$, we have $\E[a_i^2]=\frac{1}{d}$ for any $i$. Hence $\E[uu^\top]=\frac{1}{d}\sum_{i=1}^d v_i v_i^\top$.
\end{proof}
Using the lemma, we have $\E[r r^\top]=\frac{1}{D-1}\sum_{i=2}^D v_i v_i^\top=\frac{1}{D-1}(\mathbf{I}-vv^\top)$. Since $\E[r]=0$, we have $\E[vr^\top]=\E[rv^\top]=0$. Hence, we have
\begin{align*}
    \E[uu^\top]&=\E[(\sqrt{\lambda}\cdot v+\sqrt{1-\lambda}\cdot r)(\sqrt{\lambda}\cdot v+\sqrt{1-\lambda}\cdot r)^\top] \\
    &=\lambda vv^\top+(1-\lambda)\E[rr^\top] \\
    &=\lambda vv^\top+\frac{1-\lambda}{D-1}(\mathbf{I}-vv^\top).
\end{align*}
The proof is complete.
\end{proof}
\begin{remark}
The construction of the random vector $u$ such that $\E[u u^\top]=\lambda v v^\top + \frac{1-\lambda}{D-1}(\mathbf{I}-v v^\top)$ is not unique. One can choose a different kind of distribution or simply take the negative of $u$ while remaining $\E[u u^\top]$ invariant.
\end{remark}

\subsection{Proof of Eq.~(\protect\ref{eq:lambda-1})}
\label{sec:proof_eq-lambda}
Let $\alpha=v^\top\overline{\nabla f(x)}$. Suppose $D\geq 2$, $q\geq 1$. After plugging Eq.~\eqref{eq:mix_C} into Eq.~\eqref{eq:biased_obj}, the optimal $\lambda$ is given by
\begin{align}
    \lambda^* =
    \begin{cases}
        \hfil 0 & \text{if} \; \alpha^2\leq\dfrac{1}{D+2q-2}  \\
        \dfrac{(1-\alpha^2)(\alpha^2(D+2q-2)-1)}{2\alpha^2 Dq-\alpha^4 D(D+2q-2)-1} & \text{if} \; \dfrac{1}{D+2q-2} < \alpha^2 < \dfrac{2q-1}{D+2q-2} \\
        \hfil 1 & \text{if} \; \alpha^2 \geq \dfrac{2q-1}{D+2q-2}
    \end{cases}.
    \label{eq:app_lambda-1}
\end{align}

\begin{proof}
After plugging Eq.~\eqref{eq:mix_C} into Eq.~\eqref{eq:biased_obj}, we have
\begin{align*}
    L(\lambda)&=\|\nabla f(x)\|_2^2 \Big(1-\frac{(\lambda \alpha^2+\frac{1-\lambda}{D-1}(1-\alpha^2))^2}{(1-\frac{1}{q})(\lambda^2 \alpha^2+(\frac{1-\lambda}{D-1})^2(1-\alpha^2))+\frac{1}{q}(\lambda \alpha^2+\frac{1-\lambda}{D-1}(1-\alpha^2))}\Big).
\end{align*}
To minimize $L(\lambda)$, we should maximize
\begin{equation}
    F(\lambda)=\frac{(\lambda \alpha^2+\frac{1-\lambda}{D-1}(1-\alpha^2))^2}{(1-\frac{1}{q})(\lambda^2 \alpha^2+(\frac{1-\lambda}{D-1})^2(1-\alpha^2))+\frac{1}{q}(\lambda \alpha^2+\frac{1-\lambda}{D-1}(1-\alpha^2))}.
\label{eq:opt_F}
\end{equation}
Note that $F(\lambda)$ is a quadratic rational function w.r.t. $\lambda$.

Since we optimize $\lambda$ in a closed interval $[0, 1]$, checking $\lambda=0$, $\lambda=1$ and the stationary points (such that $F'(\lambda)=0$) would suffice. By solving $F'(\lambda)=0$, we have at most two solutions:
\begin{align}
    \lambda_1&=\frac{(1-\alpha^2)(\alpha^2(D+2q-2)-1)}{2\alpha^2 Dq-\alpha^4 D(D+2q-2)-1}, \label{eq:sol_lambda_1} \\
    \lambda_2&=\frac{1-\alpha^2}{1-\alpha^2 D} \nonumber,
\end{align}
where $\lambda_1$ or $\lambda_2$ is the solution if and only if the denominator is not 0. Given $\alpha^2\leq 1$ and $D\geq 2$, $\lambda_2\notin (0,1)$, so we only need to consider $\lambda_1$.

First, we figure out when $\lambda_1\in (0,1)$. We can verify that $\lambda_1=1$ when $\alpha^2=0$ and $\lambda_1=0$ when $\alpha^2=1$. Suppose $\alpha^2\in (0,1)$. Let $J$ denote the numerator in Eq.~\eqref{eq:sol_lambda_1} and $K$ denote the denominator. We have that when $\alpha^2> \frac{1}{D+2q-2}$, $J> 0$; else, $J\leq 0$. We also have that when $\alpha^2< \frac{2q-1}{D+2q-2}$, $J < K$; else, $J\geq K$. Note that $J/K\in (0,1)$ if and only if $0< J < K$ or $0> J> K$. Hence, $\lambda_1\in (0,1)$ if and only if $\frac{1}{D+2q-2} < \alpha^2 < \frac{2q-1}{D+2q-2}$.

Case 1: $\lambda_1\notin (0,1)$. Then it suffices to compare $F(0)$ with $F(1)$. We have
\begin{equation*}
    F(0)=\frac{(1-\alpha^2)q}{D+q-2}, F(1)=\alpha^2.
\end{equation*}
Hence, $F(0)\geq F(1)$ if and only if $\alpha^2\leq \frac{q}{D+2q-2}$. It means that if $\alpha^2\geq\frac{2q-1}{D+2q-2}$, then $\lambda^*=1$; if $\alpha^2\leq\frac{1}{D+2q-2}$, then $\lambda^*=0$.

Case 2: $\lambda_1\in (0,1)$. After plugging Eq.~\eqref{eq:sol_lambda_1} into Eq.~\eqref{eq:opt_F}, we have
\begin{align}
    F(\lambda_1)&=\frac{4\alpha^2(1-\alpha^2)(q-1)q}{-1+2\alpha^2(D(2q-1)+2(q-1)^2)-\alpha^4(D+2q-2)^2}. \label{eq:F_lambda_1}
\end{align}
Now we prove that $F(\lambda_1)\geq F(0)$ and $F(\lambda_1)\geq F(1)$. Since when $0< \lambda < 1$, both the numerator and the denominator in Eq.~\eqref{eq:opt_F} is positive, we have $F(\lambda) > 0$, $\forall \lambda\in (0,1)$. Since the numerator in Eq.~\eqref{eq:F_lambda_1} is non-negative and $F(\lambda_1)>0$, we know that the denominator in Eq.~\eqref{eq:F_lambda_1} is positive. Hence, we have
\begin{align*}
    F(\lambda_1)-F(0)&=\frac{q(1-\alpha^2)(\alpha^2(D+2q-2)-1)^2}{(q+D-2)(-1+2\alpha^2(D(2q-1)+2(q-1)^2)-\alpha^4(D+2q-2)^2)}> 0; \\
    F(\lambda_1)-F(1)&=\frac{\alpha^2(\alpha^2(D+2q-2)+1-2q)^2}{-1+2\alpha^2(D(2q-1)+2(q-1)^2)-\alpha^4(D+2q-2)^2}> 0.
\end{align*}
Hence in this case $\lambda^*=\lambda_1$.

The proof is complete.
\end{proof}

\subsection{Monotonicity of $\lambda^*$}
\label{sec:proof_monotonicity-lambda}
We will prove that $\lambda^*$ is a monotonically increasing function of $\alpha^2$, and a monotonically decreasing function of $q$ (when $\alpha^2>\frac{1}{D}$).
\begin{proof}
To find the monotonicity w.r.t. $\alpha^2$, note that $\lambda^*=0$ if $\alpha^2\leq \frac{1}{D+2q-2}$ and $\lambda^*=1$ when $\alpha^2\geq \frac{2q-1}{D+2q-2}$. When $\frac{1}{D+2q-2}<\alpha^2<\frac{2q-1}{D+2q-2}$, we have
\begin{align}
    \lambda^* & = \dfrac{(1-\alpha^2)(\alpha^2(D+2q-2)-1)}{2\alpha^2 Dq-\alpha^4 D(D+2q-2)-1} \nonumber\\
    & = \dfrac{\alpha^4(D+2q-2)-\alpha^2(D+2q-1)+1}{\alpha^4D(D+2q-2)-2\alpha^2Dq+1} \nonumber\\
    & = \frac{1}{D}\Big(1-\frac{(\alpha^2D-1)(D-1)}{\alpha^4D(D+2q-2)-2\alpha^2Dq+1}\Big) \label{eq:lambda-mono}\\
    & = \frac{1}{D} - \frac{D-1}{\alpha^2D(D+2q-2) - (2Dq-D-2q+2) - 2\frac{(D-1)(q-1)}{\alpha^2D-1}}. \nonumber
\end{align}
When $\alpha^2 <\frac{1}{D}$, or when $\alpha^2 >\frac{1}{D}$, a larger $\alpha^2$ leads to larger values of both $\alpha^2D(D+2q-2)$ and $-2\frac{(D-1)(q-1)}{\alpha^2D-1}$, and consequently leads to a larger $\lambda^*$. Meanwhile, by the argument in the proof of Eq.~(12), when $\frac{1}{D+2q-2}<\alpha^2<\frac{2q-1}{D+2q-2}$, the denominator of Eq.~\eqref{eq:sol_lambda_1} is positive, hence $\alpha^4D(D+2q-2)-2\alpha^2Dq+1<0$. By Eq.~\eqref{eq:lambda-mono}, when $\alpha^2 <\frac{1}{D}$, $\lambda^*<\frac{1}{D}$; when $\alpha^2 =\frac{1}{D}$, $\lambda^*=\frac{1}{D}$; when $\alpha^2 >\frac{1}{D}$, $\lambda^*>\frac{1}{D}$. We conclude that $\lambda^*$ is a monotonically increasing function of $\alpha^2$.

To find the monotonicity w.r.t $q$ when $\alpha^2>\frac{1}{D}$, Eq.~\eqref{eq:app_lambda-1} tells us that when $q\leq\frac{\alpha^2(D-2)+1}{2(1-\alpha^2)}$, $\lambda^*=1$; else, $0<\lambda^*<1$. In the latter case, we rewrite Eq.~\eqref{eq:lambda-mono} as
\begin{align*}
    \lambda^* & = \frac{1}{D}\Big(1+\frac{(\alpha^2D-1)(D-1)}{2\alpha^2D(1-\alpha^2)q-\alpha^4D(D-2)-1}\Big).
\end{align*}
We have $(\alpha^2 D-1)(D-1)>0$, and as explained before, the denominator is positive for any $q$ such that $0<\lambda^*<1$. Hence, when $\alpha^2>\frac{1}{D}$, $\lambda^*$ is a monotonically decreasing function of $q$.
\end{proof}

\subsection{Proof of Eq.~(\protect\ref{eq:lambda-2})}
\label{sec:proof_eq-lambda-2}
Let $\alpha=v^\top\overline{\nabla f(x)}$, $A^2=\sum_{i=1}^d (v_i^\top\overline{\nabla f(x)})^2$. Suppose $d\geq 1$, $q\geq 1$. After plugging Eq.~\eqref{eq:mix_dp_C} into Eq.~\eqref{eq:biased_obj}, the optimal $\lambda$ is given by
\begin{align*}
\small
    \lambda^* =
    \begin{cases}
        \hfil 0 & \text{if} \; \alpha^2 < \dfrac{A^2}{d+2q-2} \\
        \dfrac{A^2(A^2-\alpha^2(d+2q-2))}{A^4+\alpha^4d^2-2A^2\alpha^2(q+dq-1)} & \text{if} \; \dfrac{A^2}{d+2q-2}\leq\alpha^2 < \dfrac{A^2(2q-1)}{d}\\
        \hfil 1 & \text{if} \; \alpha^2\geq \dfrac{A^2(2q-1)}{d}
    \end{cases}.
\end{align*}
\begin{proof}
The proof is very similar to that in Sec.~\ref{sec:proof_eq-lambda}. After plugging Eq.~\eqref{eq:mix_dp_C} into Eq.~\eqref{eq:biased_obj}, we have
\begin{align*}
    L(\lambda)&=\|\nabla f(x)\|_2^2 \Big(1-\frac{(\lambda \alpha^2+\frac{1-\lambda}{d}A^2)^2}{(1-\frac{1}{q})(\lambda^2 \alpha^2+(\frac{1-\lambda}{d})^2 A^2)+\frac{1}{q}(\lambda \alpha^2+\frac{1-\lambda}{d}A^2)}\Big).
\end{align*}
To minimize $L(\lambda)$, we should maximize
\begin{equation}
    F(\lambda)=\frac{(\lambda \alpha^2+\frac{1-\lambda}{d}A^2)^2}{(1-\frac{1}{q})(\lambda^2 \alpha^2+(\frac{1-\lambda}{d})^2 A^2)+\frac{1}{q}(\lambda \alpha^2+\frac{1-\lambda}{d}A^2)}.
\label{eq:opt_F_dp}
\end{equation}
Note that $F(\lambda)$ is a quadratic rational function w.r.t. $\lambda$.

Since we optimize $\lambda$ in a closed interval $[0, 1]$, checking $\lambda=0$, $\lambda=1$ and the stationary points (i.e., $F'(\lambda)=0$) would suffice. By solving $F'(\lambda)=0$, we have at most two solutions:
\begin{align}
    \lambda_1&=\frac{A^2(\alpha^2(d+2q-2)-A^2)}{2A^2\alpha^2 (dq+q-1)-\alpha^4 d^2-A^4}, \label{eq:sol_lambda_1_dp} \\
    \lambda_2&=\frac{A^2}{A^2-\alpha^2 d} \nonumber,
\end{align}
where $\lambda_1$ or $\lambda_2$ is the solution if and only if the denominator is not 0. $\lambda_2\notin (0,1)$, so we only need to consider $\lambda_1$.

First, we figure out when $\lambda_1\in (0,1)$. We can verify that $\lambda_1=1$ when $\alpha^2=0$ and $\lambda_1=0$ when $A^2=0$. Suppose $\alpha^2\neq 0$ and $A^2\neq 0$. Let $J$ denote the numerator in Eq.~\eqref{eq:sol_lambda_1_dp} and $K$ denote the denominator. We have that when $\alpha^2> \frac{A^2}{d+2q-2}$, $J> 0$; else, $J\leq 0$. We also have that when $\alpha^2< \frac{A^2(2q-1)}{d}$, $J < K$; else, $J\geq K$. Note that $J/K\in (0,1)$ if and only if $0< J < K$ or $0> J> K$. Hence, $\lambda_1\in (0,1)$ if and only if $\frac{A^2}{d+2q-2} < \alpha^2 < \frac{A^2(2q-1)}{d}$.

Case 1: $\lambda_1\notin (0,1)$. Then it suffices to compare $F(0)$ and $F(1)$. We have
\begin{equation*}
    F(0)=\frac{A^2 q}{d+q-1}, F(1)=\alpha^2.
\end{equation*}
Hence, $F(0)\geq F(1)$ if and only if $\alpha^2\leq \frac{A^2 q}{d+q-1}$. It means that if $\alpha^2\geq\frac{A^2(2q-1)}{d}$, then $\lambda^*=1$; if $\alpha^2\leq\frac{A^2}{d+2q-2}$, then $\lambda^*=0$.

Case 2: $\lambda_1\in (0,1)$. After plugging Eq.~\eqref{eq:sol_lambda_1_dp} into Eq.~\eqref{eq:opt_F_dp}, we have
\begin{align}
    F(\lambda_1)&=\frac{4A^2\alpha^2 (A^2+\alpha^2)(q-1)q}{2A^2\alpha^2(2q(d+q-1)-d)-\alpha^4 d^2-A^4}. \label{eq:F_lambda_1_dp}
\end{align}

Now we prove that $F(\lambda_1)\geq F(0)$ and $F(\lambda_1)\geq F(1)$.
Since when $0<\lambda<1$, both the numerator and the denominator in Eq.~\eqref{eq:opt_F_dp} is positive, we have $F(\lambda)>0$, $\forall \lambda\in (0,1)$. Since the numerator in Eq.~\eqref{eq:F_lambda_1_dp} is non-negative, and $F(\lambda_1)> 0$, we know that the denominator in Eq.~\eqref{eq:F_lambda_1_dp} is positive. Hence, we have
\begin{align*}
    F(\lambda_1)-F(0)&=\frac{q A^2(\alpha^2(d+2q-2)-A^2)^2}{(q+d-1)(2A^2\alpha^2(2q(d+q-1)-d)-\alpha^4 d^2-A^4)}> 0; \\
    F(\lambda_1)-F(1)&=\frac{\alpha^2(\alpha^2 d+A^2(1-2q))^2}{2A^2\alpha^2(2q(d+q-1)-d)-\alpha^4 d^2-A^4}> 0.
\end{align*}
Hence in this case $\lambda^*=\lambda_1$.

The proof is complete.
\end{proof}

\subsection{Explanation on Eq.~(\protect\ref{eq:sampling-dp})}
\label{sec:proof_eq-sampling-dp}
We explain why the construction of $u_i$ in Eq.~\eqref{eq:sampling-dp} makes $\E[u_iu_i^\top]$ a good approximation of $\mathbf{C}$.

Recall the setting: In $\mathbb{R}^D$, we have a normalized transfer gradient $v$, and a specified $d$-dimensional subspace with $\{v_1,...,v_d\}$ as its orthonormal basis. Let $\mathbf{C}=\lambda v v^\top+\frac{1-\lambda}{d}\sum_{i=1}^d v_i v_i^\top$. Here we argue that if $u=\sqrt{\lambda}\cdot v+\sqrt{1-\lambda}\cdot\overline{(\mathbf{I}-vv^\top)\mathbf{V}\xi}$, then $\E[u u^\top]\approx \mathbf{C}$.

Let $r\triangleq \overline{(\mathbf{I}-vv^\top)\mathbf{V}\xi}$. The reason why $\E[u u^\top]\neq \mathbf{C}$ is that $\E[rr^\top]\neq \frac{1}{d}\sum_{i=1}^d v_i v_i^\top$ when $v$ is not orthogonal to the subspace spanned by $\{v_1,...,v_d\}$. However, by symmetry, we still have $\E[r]=0$. To get an expression of $\E[rr^\top]$, we let $v_T$ denotes the projection of $v$ onto the subspace, and let $v_1=\overline{v_T}$ so that $v_2,...,v_d$ are orthonormal to $v_T$ (hence also orthonormal to $v$). We temporarily assume $v_T\neq v$ and $v_T\neq 0$. Now let $v_1'=\overline{(\mathbf{I}-vv^\top)v_T}=\overline{v_T-v^\top v_T\cdot v}$, then $\{v_1',v_2,...,v_d\}$ form an orthonormal basis of the subspace in which $r$ lies, and $v$ is orthogonal to this modified subspace. Now we have $\E[rr^\top]=\lambda_1 v_1'v_1'^\top + \frac{1-\lambda_1}{d-1}\sum_{i=2}^d v_i v_i^\top$ where $\lambda_1$ is a number in $[0, \frac{1}{d}]$. (Note that when $v=v_T$, although $v_1'$ cannot be defined, we have $\lambda_1=0$. When $v_T=0$, we can just set $v_1'=v_1$ and $\lambda_1=\frac{1}{d}$.) When $d$ is large, $\lambda_1$ is small, so for approximation we can replace $v_1'$ with $v_1$; $|\lambda-\frac{1}{d}|$ is small, so for approximation we can set $\lambda_1=\frac{1}{d}$. Then we have $\E[rr^\top]\approx \frac{1}{d}\sum_{i=1}^d v_i v_i^\top$. Since $\E[r]=0$, we have $\E[uu^\top]=\lambda vv^\top + (1-\lambda)\E[rr^\top]\approx \lambda vv^\top + \frac{1-\lambda}{d}\sum_{i=1}^d v_i v_i^\top$.

\begin{remark}
To avoid approximation, one can choose the subspace as spanned by $\{v_1',v_2,...,v_d\}$ instead of $\{v_1,v_2,...,v_d\}$ to ensure that $v$ is orthogonal to the subspace. Then $u$ can be sampled as
\begin{equation*}
    u=\sqrt{\lambda}\cdot v+\sqrt{1-\lambda}\cdot\overline{\mathbf{V}'\xi},
\end{equation*}
where $\mathbf{V}'=[v_1',v_2,...,v_d]$ and $\xi$ is sampled uniformly from the $d$-dimensional unit hypersphere. Note that here the optimal $\lambda$ is calculated using $A'^2=v_1'^\top\overline{\nabla f(x)}+\sum_{i=2}^d (v_i^\top\overline{\nabla f(x)})^2$. However, in practice, it is not convenient to make the subspace dependent on $v$, and the computational complexity is high to construct an orthonormal basis with one vector ($v_1'$) specified.
\end{remark}

\section{Gradient averaging method}
\label{sec:average}
In Sec.~\ref{sec:biased_sampling}, we have presented the prior-guided random gradient-free (P-RGF) algorithm, where we integrate the transfer gradient into the sampling distribution of $u_i$. In this section, we propose the gradient averaging algorithm as an alternative method to incorporate the transfer gradient. The motivation is as follows. We observe that the RGF estimator in Eq.~\eqref{eq:estimate} is in the following form: $\hat{g}=\frac{1}{q}\sum_{i=1}^q\hat{g}_i$, where multiple rough estimates are averaged. Indeed, the transfer gradient itself can also be considered as an estimate of the true gradient, and then it is reasonable to adopt a weighted average of the transfer gradient and the RGF estimator. Here, we choose the RGF estimator to be the ordinary one (using $u_i$ sampled from uniform distribution) instead of the P-RGF estimator, to prevent its direction from being too similar to the direction of the transfer gradient.

In summary, the gradient averaging method works as follows. We first get the RGF estimator denoted by $\hat{g}^U$, given by Eq.~\eqref{eq:estimate} with the sampling distribution $\mathcal{P}$ being the uniform distribution; then normalize the estimator; and finally average the normalized transfer gradient $v$ and the normalized RGF estimator $\overline{\hat{g}^U}$ as
\begin{equation}
    \hat{g} = \mu v+(1-\mu) \overline{\hat{g}^U},
    \label{eq:average}
\end{equation}
where $\mu\in [0,1]$ plays a similar role as $\lambda$ in the proposed prior-guided RGF method. 
We also assume $\alpha=v^\top\overline{\nabla f(x)}\geq 0$.
Under the gradient estimation problem, we also want to minimize $L(\hat{g})$ by optimizing $\mu$.
First, we have the following theorem.

\begin{theorem}\label{thm:average}
    Let $\beta=\overline{\nabla f(x)}^\top\overline{\frac{1}{q}\sum_{i=1}^q (u_i^\top \nabla f(x)\cdot u_i)}$ be the cosine similarity between $\nabla f(x)$ and the ordinary RGF estimator w.r.t. a locally linear $f$. If $f$ is differentiable at $x$, the loss of the gradient estimator in Eq.~\eqref{eq:average} is
    \begin{equation}
        \lim_{\sigma\to 0} L(\hat{g})=(1-\frac{(\mu\alpha+(1-\mu)\E[\beta])^2}{\mu^2+(1-\mu)^2+2\mu(1-\mu)\alpha\E[\beta]})\|\nabla f(x)\|_2^2.
        \label{eq:theorem-2}
    \end{equation}
\end{theorem}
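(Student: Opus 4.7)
The plan is to follow the same two-step recipe as in Theorem \ref{the:1}: first reduce to the locally linear case by sending $\sigma\to 0$, then use the rotational invariance of the ordinary RGF estimator around $\nabla f(x)$ to evaluate the relevant moments of $\hat g$.

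First I would invoke the identity established in the proof of Theorem \ref{the:1}: whenever $\nabla f(x)^\top \E[\hat g]\geq 0$, $L(\hat g) = \|\nabla f(x)\|_2^2 - (\nabla f(x)^\top \E[\hat g])^2/\E\|\hat g\|_2^2$. So it suffices to compute these two quantities in the $\sigma\to 0$ limit. Letting $\hat g^U_0 = \frac{1}{q}\sum_{i=1}^q (u_i^\top\nabla f(x))u_i$ denote the locally linear version of the uniform RGF estimator and $\hat g_0 = (1-\mu)v + \mu\overline{\hat g^U_0}$, I would mirror Lemma 3 in Appendix \ref{sec:proof_the-1}: differentiability of $f$ at $x$ gives uniform convergence of the finite-difference quotient over the unit sphere; the map $\hat g^U\mapsto (1-\mu)v + \mu\overline{\hat g^U}$ is continuous off the event $\{\hat g^U = 0\}$ (a null event when $\nabla f(x)\neq 0$); and $\|\hat g\|_2$ is bounded by $1$. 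Dominated convergence then yields $\E[\hat g_\sigma]\to \E[\hat g_0]$ and $\E\|\hat g_\sigma\|_2^2\to \E\|\hat g_0\|_2^2$.

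The key step is a symmetry lemma: $\E[\overline{\hat g^U_0}] = \E[\beta]\,\overline{\nabla f(x)}$. This holds because the joint law of $(u_1,\dots,u_q)$ is invariant under every orthogonal transformation that fixes $\nabla f(x)$, so the same invariance transfers to $\hat g^U_0$ and to its normalization $\overline{\hat g^U_0}$. Any random vector whose distribution is invariant under this rotation subgroup must have expectation parallel to $\overline{\nabla f(x)}$, and the scalar factor is identified by taking the inner product with $\overline{\nabla f(x)}$, which equals $\E[\beta]$ by definition.

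Given this, the remaining calculation is direct expansion. Using $\|v\|_2 = \|\overline{\hat g^U_0}\|_2 = 1$,
\begin{align*}
\nabla f(x)^\top \E[\hat g_0] &= \|\nabla f(x)\|_2\left((1-\mu)\alpha + \mu\E[\beta]\right), \\
\E\|\hat g_0\|_2^2 &= (1-\mu)^2 + \mu^2 + 2\mu(1-\mu)\,v^\top \E[\overline{\hat g^U_0}] = (1-\mu)^2 + \mu^2 + 2\mu(1-\mu)\alpha\E[\beta].
\end{align*}
Substituting into the closed-form loss yields the advertised expression. The main obstacle I anticipate is making the symmetry step fully rigorous together with its measure-theoretic housekeeping (the null event $\hat g^U_0 = 0$ and passing the $\sigma\to 0$ limit through the nonlinear normalization operation); once those technicalities are settled the rest is bookkeeping.
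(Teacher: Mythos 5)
Your proposal is correct and follows essentially the same route as the paper's proof: reduce to the locally linear estimator via pointwise convergence, boundedness of the normalized estimator, and dominated convergence; use rotational symmetry about $\nabla f(x)$ to get $\E[\overline{\hat g^U_0}]=\E[\beta]\,\overline{\nabla f(x)}$; then expand the quadratic in $b$ and minimize (your appeal to the closed form $L(\hat g)=\|\nabla f(x)\|_2^2-(\nabla f(x)^\top\E[\hat g])^2/\E\|\hat g\|_2^2$ is legitimate once you also note $\E[\beta]\ge 0$, which follows from $\nabla f(x)^\top\hat g^U_0=\frac{1}{q}\sum_i(u_i^\top\nabla f(x))^2\ge 0$). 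The only discrepancy is that your expansion, which faithfully uses $\hat g=(1-\mu)v+\mu\overline{\hat g^U}$ from Eq.~\eqref{eq:average}, yields $((1-\mu)\alpha+\mu\E[\beta])^2$ in the numerator rather than the $(\mu\alpha+(1-\mu)\E[\beta])^2$ of the stated theorem; this is an internal $\mu\leftrightarrow 1-\mu$ inconsistency in the paper itself (its proof silently writes $\hat g=\mu v+(1-\mu)\overline{\hat g^U}$), not an error in your argument.
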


\begin{proof}
As in Eq.~\eqref{eq:estimate}, $\hat{g}^U=\frac{1}{q}\sum_{i=1}^q\hat{g}_i^U$ and $\hat{g}_i^U=\frac{f(x+\sigma u_i)-f(x)}{\sigma}\cdot u_i$. First, we derive $L(\hat{g})$ based on the assumption that $\hat{g}_i^U$ is equal to $u_i^\top\nabla f(x) \cdot u_i$, which will hold when $f$ is locally linear.

\begin{lemma}
    Assume that $\hat{g}^U=\frac{1}{q}\sum_{i=1}^q (u_i^\top \nabla f(x)\cdot u_i)$ (then $\beta=\overline{\nabla f(x)}^\top \overline{\hat{g}^U}$). We have
    \begin{equation*}
        L(\hat{g})=(1-\frac{(\mu\alpha+(1-\mu)\E[\beta])^2}{\mu^2+(1-\mu)^2+2\mu(1-\mu)\alpha\E[\beta]})\|\nabla f(x)\|_2^2.
    \end{equation*}
\end{lemma}

\begin{proof}
It can be verified\footnote{If $\hat{g}^U=0$, $\nabla f(x)^\top \hat{g}^U=\frac{1}{q}\sum_{i=1}^q (u_i^\top \nabla f(x))^2=0$, hence $u_i^\top\nabla f(x)=0$ for $i=1,2,...,q$, whose probability is 0. \label{fn:prob0}} that $\hat{g}^U=0$ happens with probability 0, hence we restrict our consideration to the set $\{\hat{g}^U\neq 0\}$, which does not affect our conclusion. Then $\overline{\hat{g}^U}$ is always well-defined. The distribution of $\hat{g}^U$ is symmetric around the direction of $\nabla f(x)$, and so is the distribution of $\overline{\hat{g}^U}$. Hence we can suppose that $\E[\overline{\hat{g}^U}]=k\overline{\nabla f(x)}$. Since $\E[\beta]=\overline{\nabla f(x)}^\top \E[\overline{\hat{g}^U}]=k$, we have $\E[\overline{\hat{g}^U}]=\E[\beta]\overline{\nabla f(x)}$.

Hence we have
\begin{align*}
    \nabla f(x)^\top\E[\overline{\hat{g}^U}] = \nabla f(x)^\top\E[\beta]\overline{\nabla f(x)} = \E[\beta]\|\nabla f(x)\|_2,
\end{align*}
and
\begin{align*}
    v^\top\E[\overline{\hat{g}^U}]=v^\top\E[\beta]\overline{\nabla f(x)}=\alpha\E[\beta].
\end{align*}
Together with $v^\top \nabla f(x)=\alpha\|\nabla f(x)\|_2$ and noting that $\|v\|_2=1$, we have
\begin{align}
\E\|\nabla f(x)-b\hat{g}\|_2^2 &= \E\|b\mu v + b(1-\mu)\overline{\hat{g}^U} - \nabla f(x)\|^2 \nonumber \\
&= b^2\mu^2 + b^2(1-\mu)^2 + \|\nabla f(x)\|^2_2 + 2b^2\mu(1-\mu)v^\top\E[ \overline{\hat{g}^U}] \nonumber \\ &\ \ \ \ - 2b\mu\alpha\|\nabla f(x)\|_2 - 2b(1-\mu)\nabla f(x)^\top\E[\overline{\hat{g}^U}] \label{eq:average_proof} \\
&= b^2\mu^2 + b^2(1-\mu)^2 + \|\nabla f(x)\|^2_2 + 2b^2 \mu(1-\mu) \alpha \E[\beta] \nonumber \\ &\ \ \ \ - 2b\mu\alpha\|\nabla f(x)\|_2 - 2b(1-\mu)\E[\beta]\|\nabla f(x)\| \nonumber \\ &=((1-\mu)^2+\mu^2+2\mu(1-\mu)\alpha \E[\beta])b^2 \nonumber \\ &\ \ \ \ -2(\alpha \mu+\E[\beta] (1-\mu))\|\nabla f(x)\|_2 b + \|\nabla f(x)\|^2_2. \nonumber
\end{align}
Since $\nabla f(x)^\top \hat{g}^U=\frac{1}{q}\sum_{i=1}^q (u_i^\top \nabla f(x))^2\geq 0$, then $\beta\geq 0$, and hence $\E[\beta]\geq 0$. Then $(1-\mu)^2+\mu^2+2\mu(1-\mu)\alpha \E[\beta]>0$ and $\alpha \mu+\E[\beta] (1-\mu)\geq 0$. Since $L(\hat{g})=\min_{b\geq 0}\E\|\nabla f(x)-b\hat{g}\|_2^2$, optimize the objective w.r.t. $b$ and we complete the proof.
\end{proof}

Next, we prove that if $f$ is not locally linear, as long as it is differentiable at $x$, then by picking a sufficient small $\sigma$, the loss tends to be that of the local linear approximation. Here, we redefine the notation as follows. We make the dependency of $\hat{g}^U$ on $\sigma$ explicit, i.e. we use $\hat{g}^U_\sigma$ to denote it. Meanwhile, we define $\hat{g}^U_0\triangleq \frac{1}{q}\sum_{i=1}^q (u_i^\top \nabla f(x)\cdot u_i)$ as the RGF estimator under the local linear approximation. We define $\hat{g}_\sigma=\mu v+(1-\mu)\hat{g}^U_\sigma$ and $\hat{g}_0=\mu v+(1-\mu)\hat{g}^U_0$. Then we have
\begin{lemma}
    If $f$ is differentiable at $x$, then
    \begin{equation*}
        \lim_{\sigma\to 0}L(\hat{g}_\sigma)=L(\hat{g}_0)
    \end{equation*}
\end{lemma}
\begin{proof}
By Eq.~\eqref{eq:average_proof}, it suffices to prove $\lim_{\sigma\to 0}\E[\overline{\hat{g}^U_\sigma}]=\E[\overline{\hat{g}^U_0}]$.

For any value of $u_1,u_2,...,u_q$, we have $\lim_{\sigma\to 0}\hat{g}^U_\sigma=\hat{g}^U_0$, i.e. $\hat{g}^U_\sigma$ converges pointwise to $\hat{g}^U_0$. Recall that $\mathrm{Pr}(\hat{g}^U_0=0)=0$, so we can restrict our consideration to the set $\{\hat{g}^U_0\neq 0\}$ which does not affect our conclusion. Since $\overline{x}=\frac{x}{\|x\|_2}$ is continuous everywhere in its domain, $\overline{\hat{g}^U_\sigma}$ converges pointwise to $\overline{\hat{g}^U_0}$. Since the family $\{\overline{\hat{g}^U_\sigma}\}$ is uniformly bounded, by the dominated convergence theorem we have $\lim_{\sigma\to 0}\E[\overline{\hat{g}^U_\sigma}]=\E[\overline{\hat{g}^U_0}]$.
\end{proof}
By combining the two lemmas above, our proof for the theorem is complete.
\end{proof}
We can calculate the closed-form solution of $\mu^*$, the value of $\mu$ minimizing Eq.~\eqref{eq:theorem-2}, as
\begin{equation}
    \mu^*=\frac{\alpha(1-\E[\beta]^2)}{(1-\alpha^2)\E[\beta]+\alpha(1-\E[\beta]^2)}\approx \frac{\alpha}{\E[\beta]+\alpha}.
    \label{eq:mu-1}
\end{equation}
That is, the ratio of weights of $v$ and $\overline{\hat{g}^U}$ is approximately the ratio of their (expected) inner product with the true gradient.

Next, we discuss how to calculate $\E[\beta]=\E[\overline{\nabla f(x)}^\top\overline{\hat{g}^U_0}]$, where $\hat{g}^U_0=\frac{1}{q}\sum_{i=1}^q (u_i^\top \nabla f(x)\cdot u_i)$. $\E[\beta]$ is independent of $\|\nabla f(x)\|_2$, and since $u_i$ is uniformly sampled from the unit hypersphere, $\E[\beta]$ is also independent of the direction of $\nabla f(x)$. Hence, $\E[\beta]$ is a constant given the dimension $D$ and the number of queries $q$, and we can estimate $\E[\beta]$ using numerical simulation methods.

However, here we give a framework for approximating $\E[\beta]$ in a closed-form formula. We notice that the following approximation works well in practice, where $\hat{g}=\frac{1}{q}\sum_{i=1}^q (u_i^\top \nabla f(x)\cdot u_i)$:
\begin{align*}
    \E[\beta]&=\E[\sqrt{\beta^2}] \\&\approx \sqrt{\E[\beta^2]}\\&=\sqrt{1-\E[\min_b\|\overline{\nabla f(x)}-b\hat{g}\|^2]}\\&=\sqrt{1-\frac{1}{\|\nabla f(x)\|_2^2}\E[\min_b\|\overline{\nabla f(x)}-b\hat{g}\|^2]}\\&\approx \sqrt{1-\frac{1}{\|\nabla f(x)\|_2^2}\min_b\E\|\overline{\nabla f(x)}-b\hat{g}\|^2}\\&=\sqrt{1-\frac{1}{\|\nabla f(x)\|_2^2}L(\hat{g})^2}.
\end{align*}
Here, the first equality is because $\nabla f(x)^\top \hat{g}=\frac{1}{q}\sum_{i=1}^q (u_i^\top \nabla f(x))^2\geq 0$; the second equality is because we have $\min_b\|\overline{\nabla f(x)}-b\hat{g}\|^2=1-(\overline{\nabla f(x)}^\top \overline{\hat{g}})^2=1-\beta^2$. Intuitively, the two approximations work well because the variances of $\beta$ and $\|\hat{g}\|_2$ are relatively small.

Now we define $F(\hat{g})=1-\frac{1}{\|\nabla f(x)\|_2^2}L(\hat{g})^2$. Then we have $\E[\beta]\approx \sqrt{F(\hat{g})}$. Note that when $u_i$ is sampled from the uniform distribution on the unit hypersphere, $F(\hat{g})$ is in fact $F(\frac{1}{D})$ in Eq.~\eqref{eq:opt_F}, since $\hat{g}$ is an RGF estimator w.r.t. locally linear $f$, and $\E[u_i u_i^\top]=\frac{1}{D}\mathbf{I}$ which corresponds to $\lambda=\frac{1}{D}$ in Eq.~\eqref{eq:mix_C}. We can calculate $F(\frac{1}{D})=\frac{q}{D+q-1}$. Hence, $\E[\beta]\approx\sqrt{\frac{q}{D+q-1}}$.

Calculating $\mu^*$ using $\alpha\geq 0$ and $\E[\beta]\approx \sqrt{\frac{q}{D+q-1}}>0$, we have $\mu^*<1$. This means we always need to take $q$ queries to get $\overline{\hat{g}^U}$. However, when $\mu$ is close to $1$, the improvement of using $\hat{g}=\mu^* v+(1-\mu^*)\overline{\hat{g}^U}$ instead of directly using $v$ as the estimate is marginal. To save queries, we adopt a threshold $c\in (0,1)$. When $\mu^*\geq c$, we let $\hat{g}=v$ instead of letting $\hat{g}=\mu^*v+(1-\mu^*)\overline{\hat{g}^U}$.
\begin{algorithm}[t]
\small
\caption{Gradient averaging method}
\label{alg:average}
\begin{algorithmic}[1]
\Require The black-box model $f$; input $x$ and label $y$; the normalized transfer gradient $v$; sampling variance $\sigma$; number of queries $q$; input dimension $D$; threshold $c$.
\Ensure Estimate of the gradient $\nabla f(x)$.
\State Estimate the cosine similarity $\alpha=v^\top\overline{\nabla f(x)}$ (detailed in Sec.~\ref{sec:alpha});
\State Approximate $\E[\beta]$ as $\sqrt{\frac{q}{D+q-1}}$;
\State Calculate $\mu^*$ according to Eq.~\eqref{eq:mu-1} given $\alpha$ and $\E[\beta]$;
\If {$\mu^*\geq c$}
\Return $v$;
\EndIf
\State $\hat{g}^U \leftarrow \mathbf{0}$;
\For {$i = 1$ to $q$}
\State Sample $u_i$ from the uniform distribution on the $D$-dimensional unit hypersphere;
\State $\hat{g}^U \leftarrow \hat{g}^U + \dfrac{f(x + \sigma u_i,y) - f(x,y)}{\sigma} \cdot u_i$;
\EndFor
\Return $\nabla f(x)\leftarrow \mu^* v+(1-\mu^*)\overline{\hat{g}^U}$.
\end{algorithmic}
\end{algorithm}

We summarize the gradient averaging method in Algorithm~\ref{alg:average}.

\subsection{Incorporating the data-dependent prior}
We can also incorporate the data-dependent prior introduced in Sec.~\ref{sec:dp} into the proposed gradient averaging method. In this case, we get an ordinary subspace RGF estimate $\hat{g}^S$ first\footnote{An ordinary subspace RGF estimate refers to the RGF estimate in Eq.~\eqref{eq:estimate} with $u_i=\mathbf{V}\xi_i$, where $\xi_i$ is sampled uniformly from the $d$-dimensional unit hypersphere, $\mathbf{V}=[v_1,v_2,...,v_d]$, and $\{v_1, v_2, ..., v_d\}$ is an orthonormal basis of a $d$-dimensional subspace. It corresponds to $\lambda=0$ in Eq.~\eqref{eq:mix_dp_C}.} (instead of an ordinary RGF estimate); and then normalize it; and finally get the averaged estimator as
\begin{equation}
    \hat{g} = \mu v+(1-\mu)\overline{\hat{g}^S}.
    \label{eq:dp_average}
\end{equation}
We also assume $\alpha=v^\top\overline{\nabla f(x)}\geq 0$. Here, we need to analyze some quantity about the subspace. We define $\overline{\nabla f(x)}_T=(\sum_{i=1}^d v_i v_i^\top)\overline{\nabla f(x)}$ is the projection of $\overline{\nabla f(x)}$ into the subspace corresponding to the data-dependent prior, and $A^2=\sum_{i=1}^d (v_i^\top \overline{\nabla f(x)})^2=\|\overline{\nabla f(x)}_T\|^2$. Then we have the following loss function:
\begin{theorem}
Let $\beta=\overline{\nabla f(x)}^\top\overline{\frac{1}{q}\sum_{i=1}^q (u_i^\top \nabla f(x)\cdot u_i)}$ be the cosine similarity between $\nabla f(x)$ and the ordinary subspace RGF estimator w.r.t. a locally linear $f$. (Note that here $u_i$ lies in the subspace.) Furthermore, let $\alpha_1=v^\top\overline{\nabla f(x)}_T$. If $f$ is differentiable at $x$ and $A^2>0$, using $\hat{g}$ defined in \eqref{eq:dp_average}, we have
\begin{equation}
    \lim_{\sigma\to 0} L(\hat{g})=(1-\frac{(\mu\alpha+(1-\mu)\E[\beta])^2}{\mu^2+(1-\mu)^2+2\mu(1-\mu)\frac{\alpha_1}{A^2}\E[\beta]})\|\nabla f(x)\|^2.
    \label{eq:theorem-3}
\end{equation}
\end{theorem}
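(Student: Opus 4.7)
The plan is to mirror the two-step structure of the proof of Theorem 2. First, under the locally linear surrogate $\hat{g}^S = \frac{1}{q}\sum_{i=1}^q (u_i^\top \nabla f(x))\,u_i$, I will obtain the closed-form loss exactly. Starting from
\begin{equation*}
\E\|\nabla f(x) - b\hat{g}\|_2^2 = \|\nabla f(x)\|_2^2 - 2b\,\nabla f(x)^\top \E[\hat{g}] + b^2\,\E\|\hat{g}\|_2^2,
\end{equation*}
I will verify that the cross term $\nabla f(x)^\top \E[\hat{g}]$ is nonnegative so the constraint $b \geq 0$ is inactive, giving
\begin{equation*}
L(\hat{g}) = \|\nabla f(x)\|_2^2 - \frac{\bigl(\nabla f(x)^\top \E[\hat{g}]\bigr)^2}{\E\|\hat{g}\|_2^2}.
\end{equation*}
If the statement is intended for the finite-$\sigma$ estimator rather than the linear surrogate, a dominated-convergence step exactly parallel to Lemma 2 in Appendix A.1 will lift the identity from the linear case to the $\sigma$-dependent estimator as $\sigma \to 0$.

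The key new ingredient is evaluating $\E[\overline{\hat{g}^S}]$ when sampling is confined to the subspace. Let $P = \sum_{i=1}^d v_i v_i^\top$ and $\nabla f(x)_T = P\nabla f(x)$; because $u_i = \mathbf{V}\xi_i \in \mathrm{range}(P)$, the identity $u_i^\top \nabla f(x) = u_i^\top \nabla f(x)_T$ shows that $\hat{g}^S$ is an RGF-type estimator of $\nabla f(x)_T$ inside the subspace. The uniform law of $\xi_i$ on the $d$-dimensional sphere then makes the distribution of $\hat{g}^S$ rotationally symmetric within the subspace about the axis through $\nabla f(x)_T$, so $\E[\overline{\hat{g}^S}] = k\,\overline{\nabla f(x)_T}$ for some $k \geq 0$. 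Taking the inner product of both sides with $\overline{\nabla f(x)}$ and using $\overline{\nabla f(x)}^\top \overline{\nabla f(x)_T} = A$ (which follows from $\nabla f(x)^\top P\nabla f(x) = \|P\nabla f(x)\|_2^2$) pins down $k = \E[\beta]/A$, giving $\E[\overline{\hat{g}^S}] = (\E[\beta]/A)\,\overline{\nabla f(x)_T}$. Substituting this identity together with $\nabla f(x)^\top \overline{\nabla f(x)_T} = A\|\nabla f(x)\|_2$ and $v^\top \overline{\nabla f(x)_T} = \alpha_1/A$, one reads off
\begin{equation*}
\nabla f(x)^\top \E[\hat{g}] = \bigl(\mu\alpha + (1-\mu)\E[\beta]\bigr)\|\nabla f(x)\|_2,\quad \E\|\hat{g}\|_2^2 = \mu^2 + (1-\mu)^2 + 2\mu(1-\mu)\frac{\alpha_1}{A^2}\E[\beta],
\end{equation*}
and plugging into the formula for $L(\hat{g})$ recovers \eqref{eq:theorem-3}.

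The main obstacle is the symmetry argument of the second paragraph: because $\nabla f(x)$ is not in general contained in the subspace, the natural axis of symmetry is the projected direction $\overline{\nabla f(x)_T}$ rather than $\overline{\nabla f(x)}$, and the cosine between $\overline{\hat{g}^S}$ and $\overline{\nabla f(x)}$ must be re-expressed through the in-subspace geometry via the scalar $A$. The hypothesis $A^2 > 0$ enters exactly here, ensuring $\nabla f(x)_T \neq 0$ so that $\overline{\nabla f(x)_T}$ is well-defined and symmetry holds about a genuine axis; a secondary subtlety is that $\overline{\hat{g}^S}$ can fail to be defined on a null event $\{\hat{g}^S = 0\}$, handled as in footnote \ref{fn:prob0} of Theorem 2's proof. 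Everything else is bookkeeping that mirrors Theorem 2 verbatim.
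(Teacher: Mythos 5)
Your proposal is correct and follows essentially the same route as the paper's proof: reduce to the locally linear surrogate, exploit rotational symmetry of $\hat{g}^S$ within the subspace about the axis of $\nabla f(x)_T$ (using $A^2>0$ to ensure this axis exists and that $\{\hat{g}^S=0\}$ is a null event) to get $\E[\overline{\hat{g}^S}]\propto\overline{\nabla f(x)}_T$ with the constant pinned down by $\E[\beta]$, then read off the cross terms via $A$ and $\alpha_1$ and pass to $\sigma\to 0$ by dominated convergence. The only cosmetic difference is your normalization convention for the projected direction (unit vector versus projection of the unit gradient, which has norm $A$), and you inherit the paper's own swap of $\mu$ and $1-\mu$ between Eq.~\eqref{eq:dp_average} and the theorem statement.
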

\begin{proof}
Similar to the proof of Theorem~\ref{thm:average}, we define $\hat{g}^S_0=\frac{1}{q}\sum_{i=1}^q (u_i^\top \nabla f(x)\cdot u_i)=\frac{1}{q}\sum_{i=1}^q (u_i^\top \nabla f(x)_T\cdot u_i)$, where $\nabla f(x)_T=\|\nabla f(x)\|_2\overline{\nabla f(x)}_T$ denotes the projection of $\nabla f(x)$ into the subspace. Then $\beta=\overline{\nabla f(x)}^\top \overline{\hat{g}^S_0}=\overline{\nabla f(x)}_T^\top \overline{\hat{g}^S_0}$. Since $A^2>0$, we have $\nabla f(x)_T\neq 0$, hence as described in Footnote~\ref{fn:prob0}, we can prove $\mathrm{Pr}(\hat{g}^S_0=0)=0$ similarly. Now we restrict our consideration to the set $\{\hat{g}^S_0\neq 0\}$. The distribution of $\hat{g}^S_0$ is symmetric around the direction of $\nabla f(x)_T$, and so is the distribution of $\overline{\hat{g}^S_0}$. Hence we can suppose that $\E[\overline{\hat{g}^S_0}]=k\overline{\nabla f(x)}_T$. Since $\E[\beta]=\overline{\nabla f(x)}_T^\top \E[\overline{\hat{g}^S_0}]=k\|\overline{\nabla f(x)}_T\|_2^2=kA^2$, we have $\E[\overline{\hat{g}_0^S}]=\frac{\E[\beta]}{A^2}\overline{\nabla f(x)}_T$.

Note that
\begin{align*}
    v^\top\E[\overline{\hat{g}_0^S}]=v^\top\frac{\E[\beta]}{A^2}\overline{\nabla f(x)}_T=\frac{\alpha_1}{A^2}\E[\beta].
\end{align*}
The rest of the proof is the same as that of Theorem~\ref{thm:average}.
\end{proof}
The optimal solution of $\mu$ minimizing Eq.~\eqref{eq:theorem-3} is
\begin{equation}
    \mu^*=\frac{A^2\alpha-\alpha_1\E[\beta]^2}{(A^2-\alpha_1\E[\beta])(\alpha+\E[\beta])}\approx \frac{\alpha}{\E[\beta]+\alpha}.
    \label{eq:mu-2}
\end{equation}
The approximation works mainly because $A\gg \E[\beta]$ (since $\E[\beta]\approx A\sqrt{\frac{q}{d+q-1}}$ as shown below). Hence, the approximate solution is the same as in the case without using the data-dependent prior, which does not depend on $\alpha_1$.

Similarly, we can approximate $\E[\beta]$ by $\E[\beta]\approx \sqrt{F(\hat{g})}$. When $u_i$ is sampled from the uniform distribution on the unit hypersphere in the subspace, $F(\hat{g})$ is in fact $F(0)$ in Eq.~\eqref{eq:opt_F_dp}, since $\hat{g}$ is an RGF estimator w.r.t. locally linear $f$, and $\E[u_iu_i^T]=\frac{1}{d}\sum_{i=1}^d v_i v_i^\top$ which corresponds to $\lambda=0$ in Eq.~\eqref{eq:mix_dp_C}. We can calculate $F(0)=\frac{A^2q}{d+q-1}$. Hence, $\E[\beta]\approx \sqrt{\frac{A^2q}{d+q-1}}$.

Our gradient averaging algorithm with the data-dependent prior is similar to Algorithm~\ref{alg:average}.
We first estimate $\alpha$ and $A$, approximate $\E[\beta]$ as $\sqrt{\frac{A^2q}{d+q-1}}$, and then calculate $\mu^*$ by Eq.~\eqref{eq:mu-2}.
If $\mu^*\geq c$, we use the transfer gradient $v$ as the estimate.
If not, we get the ordinary subspace RGF estimator $\hat{g}^S$, then use $\hat{g}\leftarrow \mu^* v+(1-\mu^*)\overline{\hat{g}^S}$ as the estimate.

\section{Estimation of $A$}
\label{sec:estimation-A}
Suppose that the subspace is spanned by a set of orthonormal vectors $\{v_1,...,v_d\}$. Now we want to estimate 
\begin{align*}
    A^2=\sum_{i=1}^d (v_i^\top \overline{\nabla f(x)})^2=\frac{\sum_{i=1}^d (v_i^\top \nabla f(x))^2}{\|\nabla f(x)\|_2^2}=\frac{\|h(x)\|_2^2}{\|\nabla f(x)\|_2^2},
\end{align*}
where $h(x)=\sum_{i=1}^d v_i^\top\nabla f(x)\cdot v_i$ is the projection of $\nabla f(x)$ to the subspace. We can estimate $\|\nabla f(x)\|_2^2$ using the method introduced in Sec.~3.3. Here, we introduce the method to estimate $\|h(x)\|_2^2$.

Let $w=\mathbf{V}\xi$ where $\mathbf{V}=[v_1,v_2,...,v_d]$ and $\xi$ is a random vector uniformly sampled from the $d$-dimensional unit hypersphere. By Lemma~\ref{lem:covariance}, $\E[ww^\top]=\frac{1}{d}\sum_{i=1}^d v_i v_i^\top$. Suppose we have $S$ i.i.d. such samples of $w$ denoted by $w_1, ..., w_S$, and we let $\mathbf{W}=[w_1, ..., w_S]$.

With $g(x_1, ..., x_S)=\frac{1}{S}\sum_{s=1}^S x_s^2$, we have
\begin{align*}
    g(\mathbf{W}^\top\nabla f(x))=g(\mathbf{W}^\top h(x))=\|h(x)\|_2^2\cdot g(\mathbf{W}^\top \overline{h(x)}).
\end{align*}
Hence $\frac{g(\mathbf{W}^\top\nabla f(x))}{\E[g(\mathbf{W}^\top \overline{h(x)})]}$ is an unbiased estimator of $\|h(x)\|_2^2$. Now, $\overline{h(x)}$ is in the subspace spanned by $\{v_1,...,v_d\}$, and $w_1$ is uniformly distributed on the unit hypersphere of this subspace. Hence $\E[(w_1^\top\overline{h(x)})^2]$ is independent of the direction of $\overline{h(x)}$ and can be computed. We have:
\begin{align*}
    \E[g(\mathbf{W}^\top \overline{h(x)})]=\E[(w_1^\top\overline{h(x)})^2]=\overline{h(x)}^\top\E[w_1 w_1^\top]\overline{h(x)}=\overline{h(x)}^\top\frac{1}{d}\sum_{i=1}^d v_i v_i^\top\overline{h(x)}=\frac{1}{d}.
\end{align*}
Hence, we have the estimator $\|h(x)\|_2\approx \sqrt{\frac{d}{S}\sum_{s=1}^S(w_s^\top\nabla f(x))^2}$, where $w_s=\mathbf{V}\xi_s$ and $\xi_s$ is uniformly sampled from the unit hypersphere in $\mathbb{R}^d$. Finally we can get an estimate of $A$ by $A=\frac{\|h(x)\|_2}{\|\nabla f(x)\|_2}$.

\section{Additional experiments}
\label{sec:additional-exps}
We add the experimental results using the gradient averaging method, including a baseline method which uses a fixed $\mu$ set to $0.5$ or $0.05$ and the algorithm using the optimal value $\mu^*$ given by Eq.~\eqref{eq:mu-1} (or by Eq.~\eqref{eq:mu-2} in the case with the data-dependent prior). We set $c=\frac{1}{1+\sqrt{2}}$, and the other hyperparameters are the same with those for the P-RGF method. Table~\ref{tab:final-results-full} and Table~\ref{tab:final-results-defense-full} are the full tables of experimental results based on the $\ell_2$ norm.

We show the experimental results based on the $\ell_\infty$ norm in this section. We set the perturbation budget as $\epsilon=0.05$, the step size as $\eta=0.005$ in the PGD method. Other hyperparameters are the same with those for $\ell_2$ attacks.
Table~\ref{tab:final-results-linfty} and Table~\ref{tab:final-results-linfty-defense} show the results for attacking the normal models and the defensive models, respectively.
Our method also leads to better results, which are consistent with those based on the $\ell_2$ norm.

\begin{table}
  \caption{The full experimental results of black-box attacks against Inception-v3, VGG-16, and ResNet-50 under the $\ell_2$ norm. We report the attack success rate (ASR) and the average number of queries (AVG. Q) needed to generate an adversarial example over successful attacks.}
  \label{tab:final-results-full}
  \centering

  \begin{tabular}{l|cc|cc|cc}
    \hline
    \multirow{2}{*}{Methods} & \multicolumn{2}{c|}{Inception-v3} & \multicolumn{2}{c|}{VGG-16} & \multicolumn{2}{c}{ResNet-50}\\
    \cline{2-7}
    & ASR & AVG. Q & ASR & AVG. Q & ASR & AVG. Q \\
    \hline
    NES~\cite{ilyas2018black} & 95.5\% & 1718 & 98.7\% & 1081 & 98.4\% & 969 \\
    Bandits\textsubscript{T}~\cite{ilyas2018prior} & 92.4\% & 1560 & 94.0\% & 584 & 96.2\% & 1076 \\
    Bandits\textsubscript{TD}~\cite{ilyas2018prior} & 97.2\% & 874 & 94.9\% & 278 & 96.8\% & 512 \\
    AutoZoom~\cite{tu2018autozoom} & 85.4\% & 2443 & 96.2\% & 1589 & 94.8\% & 2065 \\
    \hline
    RGF & 97.7\% & 1309 & \bf99.8\% & 749 & \bf99.6\% & 673 \\
    P-RGF ($\lambda=0.5$) & 96.5\% & 1119 & 97.8\% & 710 & 98.7\% & 635 \\
    P-RGF ($\lambda=0.05$) & 97.8\% & 1021 & 99.7\% & 624 & 99.3\% & 511 \\
    P-RGF ($\lambda^*$) & \bf98.1\% & 745 & 99.6\% & 331 & \bf99.6\% & 265 \\
    Averaging ($\mu=0.5$) & 97.9\% & 958 & \bf99.8\% & 528 & \bf99.6\% & 485 \\
    Averaging ($\mu=0.05$) & 97.8\% & 1260 & \bf99.8\% & 700 & \bf99.6\% & 619 \\
    Averaging ($\mu^*$) & 97.9\% & \bf735 & 99.7\% & \bf320 & 99.5\% & \bf250 \\
    \hline
    RGF\textsubscript{D} & 99.1\% & 910 & \bf100.0\% & 372 & 99.7\% & 429 \\
    P-RGF\textsubscript{D} ($\lambda=0.5$) & 98.2\% & 1047 & 99.7\% & 634 & 99.5\% & 552 \\
    P-RGF\textsubscript{D} ($\lambda=0.05$) & 99.1\% & 754 & 99.9\% & 359 & \bf99.8\% & 379 \\
    P-RGF\textsubscript{D} ($\lambda^*$) & 99.1\% & 649 & 99.8\% & 250 & 99.6\% & \bf232 \\
    Averaging\textsubscript{D} ($\mu=0.5$) & \bf99.3\% & 734 & \bf100.0\% & 332 & 99.7\% & 340 \\
    Averaging\textsubscript{D} ($\mu=0.05$) & 99.0\% & 865 & \bf100.0\% & 360 & 99.7\% & 404 \\
    Averaging\textsubscript{D} ($\mu^*$) & 99.2\% & \bf644 & 99.7\% & \bf239 & 99.7\% & 240 \\
    \hline
  \end{tabular}
\end{table}

\begin{table}
  \caption{The full experimental results of black-box attacks against JPEG compression~\cite{Guo2017Countering}, randomization~\cite{Xie2018Mitigating}, and guided denoiser~\cite{Liao2017Defense} under the $\ell_2$ norm. We report the attack success rate (ASR) and the average number of queries (AVG. Q) needed to generate an adversarial example over successful attacks.}
  \label{tab:final-results-defense-full}
  \centering
  
  \begin{tabular}{l|cc|cc|cc}
    \hline
    \multirow{2}{*}{Methods} & \multicolumn{2}{c|}{JPEG Compression~\cite{Guo2017Countering}} & \multicolumn{2}{c|}{Randomization~\cite{Xie2018Mitigating}} & \multicolumn{2}{c}{Guided Denoiser~\cite{Liao2017Defense}}\\
    \cline{2-7}
    & ASR & AVG. Q & ASR & AVG. Q & ASR & AVG. Q \\
    \hline
    NES~\cite{ilyas2018black} & 47.3\% & 3114 & 23.2\% & 3632 & 48.0\% & 3633 \\
    SPSA~\cite{uesato2018adversarial} & 40.0\% & 2744 & 9.6\% & 3256 & 46.0\% & 3526 \\
    \hline
    RGF & 41.5\% & 3126 & 19.5\% & 3259 & 50.3\% & 3569 \\
    P-RGF & 61.4\% & 2419 & 60.4\% & 2153 & 51.4\% & 2858 \\
    Averaging & \bf69.4\% & \bf2134 & \bf72.8\% & \bf1739 & \bf66.6\% & \bf2441 \\
    \hline
    RGF\textsubscript{D} & 70.4\% & 2828 & 54.9\% & 2819 & 83.7\% & 2230 \\
    P-RGF\textsubscript{D} & \bf81.1\% & 2120 & \bf82.3\% & 1816 & \bf89.6\% & 1784 \\
    Averaging\textsubscript{D} & 80.6\% & \bf2087 & 77.4\% & \bf1700 & 87.2\% & \bf1777 \\
    \hline
  \end{tabular}
\end{table}

\begin{table}
  \caption{The experimental results of black-box attacks against Inception-v3, VGG-16, and ResNet-50 under the $\ell_\infty$ norm. We report the attack success rate (ASR) and the average number of queries (AVG. Q) needed to generate an adversarial example over successful attacks.}
  \label{tab:final-results-linfty}
  \centering
  
  \begin{tabular}{l|cc|cc|cc}
    \hline
    \multirow{2}{*}{Methods} & \multicolumn{2}{c|}{Inception-v3} & \multicolumn{2}{c|}{VGG-16} & \multicolumn{2}{c}{ResNet-50}\\
    \cline{2-7}
    & ASR & AVG. Q & ASR & AVG. Q & ASR & AVG. Q \\
    \hline
    NES~\cite{ilyas2018black} &  87.5\% & 1850 & 95.6\% & 1477 & 94.5\% & 1405 \\
    Bandits\textsubscript{T}~\cite{ilyas2018prior} & 89.5\% & 1891 & 93.8\% & 585 & 95.2\% & 1199\\
    Bandits\textsubscript{TD}~\cite{ilyas2018prior} & 94.7\% & 1099 & 95.1\% & 288 & 96.5\% & 651 \\
    \hline
    RGF & 94.4\% & 1565 & 98.8\% & 1064 & \bf99.4\% & 990 \\
    P-RGF ($\lambda=0.5$) & 85.4\% & 1578 & 92.2\% & 1180 & 95.1\% & 1046 \\
    P-RGF ($\lambda=0.05$) & 92.7\% & 1409 & 97.5\% & 1131 & 98.3\% & 891 \\
    P-RGF ($\lambda^*$) & 93.8\% & 979 & 98.5\% & 635 & 99.0\% & 507 \\
    Averaging ($\mu=0.5$) & \bf94.9\% & 1263 & 98.9\% & 851 & 99.2\% & 758 \\
    Averaging ($\mu=0.05$) & 94.5\% & 1417 & \bf99.2\% & 1008 & \bf99.4\% & 869 \\
    Averaging ($\mu^*$) & 94.8\% & \bf974 & 98.5\% & \bf560 & 99.3\% & \bf490 \\
    \hline
    RGF\textsubscript{D} & 97.2\% & 1034 & \bf100.0\% & 502 & 99.7\% & 595 \\
    P-RGF\textsubscript{D} ($\lambda=0.5$) & 91.2\% & 1403 & 98.0\% & 1008 & 97.3\% & 852 \\
    P-RGF\textsubscript{D} ($\lambda=0.05$) & 97.7\% & 1005 & 99.9\% & 543 & 99.7\% & 598 \\
    P-RGF\textsubscript{D} ($\lambda^*$) & 97.3\% & 812 & 99.7\% & \bf370 & 99.6\% & 388 \\
    Averaging\textsubscript{D} ($\mu=0.5$) & 98.0\% & 898 & \bf100.0\% & 481 & \bf99.8\% & 504 \\
    Averaging\textsubscript{D} ($\mu=0.05$) & 97.5\% & 930 & \bf100.0\% & 482 & 99.7\% & 548 \\
    Averaging\textsubscript{D} ($\mu^*$) & \bf98.4\% & \bf772 & 99.7\% & 374 & 99.6\% & \bf365 \\
    \hline
  \end{tabular}
\end{table}

\begin{table}
  \caption{The experimental results of black-box attacks against JPEG compression~\cite{Guo2017Countering}, randomization~\cite{Xie2018Mitigating}, and guided denoiser~\cite{Liao2017Defense} under the $\ell_\infty$ norm. We report the attack success rate (ASR) and the average number of queries (AVG. Q) needed to generate an adversarial example over successful attacks.}
  \label{tab:final-results-linfty-defense}
  \centering
  
  \begin{tabular}{l|cc|cc|cc}
    \hline
    \multirow{2}{*}{Methods} & \multicolumn{2}{c|}{JPEG Compression~\cite{Guo2017Countering}} & \multicolumn{2}{c|}{Randomization~\cite{Xie2018Mitigating}} & \multicolumn{2}{c}{Guided Denoiser~\cite{Liao2017Defense}}\\
    \cline{2-7}
    & ASR & AVG. Q & ASR & AVG. Q & ASR & AVG. Q \\
    \hline
    NES~\cite{ilyas2018black} & 29.9\% & 2694 & 14.8\% & 3027 & 20.0\% & 3423 \\
    SPSA~\cite{uesato2018adversarial} & 37.1\% & 2775 & 10.7\% & 2809 & 26.9\% & 3343 \\
    \hline
    RGF & 27.1\% & 2716 & 12.6\% & 3005 & 26.0\% & 3120 \\
    P-RGF & 44.8\% & 2491 & 41.7\% & 2132 & 32.9\% & 2507 \\
    Averaging & \bf51.8\% & \bf2138 & \bf51.9\% & \bf1813 & \bf38.7\% & \bf2251 \\
    \hline
    RGF\textsubscript{D} & 53.4\% & 2708 & 42.4\% & 2444 & 73.3\% & 2158 \\
    P-RGF\textsubscript{D} & \bf64.0\% & 2189 & \bf66.9\% & 2108 & 76.0\% & \bf1799 \\
    Averaging\textsubscript{D} & \bf64.0\% & \bf2141 & 58.3\% & \bf1753 & \bf77.6\% & 1889 \\
    \hline
  \end{tabular}
\end{table}

\end{document}